\newtheorem{theorem}{Theorem}
\newtheorem{proposition}{Proposition}
\newtheorem{lemma}{Lemma}
\newcommand{\reals}{\mathbb{R}}
\newcommand{\E}{\mathbb{E}}
\newcommand{\var}{\text{Var}}
\newcommand{\Prob}[1]{{\mathrm{Prob} \left(#1\right)}}
\newcommand{\bx}{\mathbf{x}}
\newcommand{\bw}{\mathbf{w}}
\newcommand{\bv}{\mathbf{v}}
\newcommand{\bz}{\mathbf{z}}
\newcommand{\bg}{\mathbf{g}}
\newcommand{\Ocal}{\mathcal{O}}
\newcommand{\Wcal}{\mathcal{W}}
\newcommand{\norm}[1]{\|#1\|}
\newcommand{\inner}[1]{\langle#1\rangle}
\newcommand{\secref}[1]{Sec.~\ref{#1}}
\newcommand{\figref}[1]{Fig.~\ref{#1}}
\renewcommand{\eqref}[1]{Eq.~(\ref{#1})}
\newcommand{\lemref}[1]{Lemma~\ref{#1}}
\newcommand{\thmref}[1]{Thm.~\ref{#1}}
\newcommand{\propref}[1]{Proposition~\ref{#1}}
\newcommand{\appref}[1]{Appendix~\ref{#1}}
\begin{document}

\title{Making Gradient Descent Optimal\\ for Strongly Convex Stochastic Optimization}
\author{Alexander Rakhlin\\rakhlin@wharton.upenn.edu\\University of Pennsylvania
\and
Ohad Shamir\\ohadsh@microsoft.com\\Microsoft Research New England
\and
Karthik Sridharan\\skarthik@wharton.upenn.edu\\University of Pennsylvania}

\date{}
\maketitle

\begin{abstract}
Stochastic gradient descent (SGD) is a simple and popular method to solve stochastic optimization problems which arise in machine learning. For strongly convex problems, its convergence rate was known to be $\Ocal(\log(T)/T)$, by running SGD for $T$ iterations and returning the average point. However, recent results showed that using a different algorithm, one can get an optimal $\Ocal(1/T)$ rate. This might lead one to believe that standard SGD is suboptimal, and maybe should even be replaced as a method of choice. In this paper, we investigate the optimality of SGD in a stochastic setting. We show that for smooth problems, the algorithm attains the optimal $\Ocal(1/T)$ rate. However, for non-smooth problems, the convergence rate with averaging might really be $\Omega(\log(T)/T)$, and this is not just an artifact of the analysis. On the flip side, we show that a simple modification of the averaging step suffices to recover the $\Ocal(1/T)$ rate, and no other change of the algorithm is necessary. We also present experimental results which support our findings, and point out open problems.
\end{abstract}

\section{Introduction}

Stochastic gradient descent (SGD) is one of the simplest and most popular first-order methods to solve convex learning problems. Given a convex loss function and a training set of $T$ examples, SGD can be used to obtain a sequence of $T$ predictors, whose average has a generalization error which converges (with $T$) to the optimal one in the class of predictors we consider. The common framework to analyze such first-order algorithms is via stochastic optimization, where our goal is to optimize an unknown convex function $F$, given only unbiased estimates of $F$'s subgradients (see \secref{sec:prelim} for a more precise definition).

An important special case is when $F$ is strongly convex (intuitively, can be lower bounded by a quadratic function). Such functions arise, for instance, in Support Vector Machines and other regularized learning algorithms. For such problems, there is a well-known $\Ocal(\log(T)/T)$ convergence guarantee for SGD with averaging. This rate is obtained using the analysis of the algorithm in the harder setting of online learning \cite{HazAgKal07}, combined with an online-to-batch conversion (see \cite{HazanKa11} for more details).

Surprisingly, a recent paper by Hazan and Kale \cite{HazanKa11} showed that in fact, an $\Ocal(\log(T)/T)$ is not the best that one can achieve for strongly convex stochastic problems. In particular, an optimal $\Ocal(1/T)$ rate can be obtained using a different algorithm, which is somewhat similar to SGD but is more complex (although with comparable computational complexity)\footnote{Roughly speaking, the algorithm divides the $T$ iterations into exponentially increasing epochs, and runs stochastic gradient descent with averaging on each one. The resulting point of each epoch is used as the starting point of the next epoch. The algorithm returns the resulting point of the last epoch.}. A very similar algorithm was also presented recently by Juditsky and Nesterov \cite{JudNes11}.

These results left an important gap: Namely, whether the true convergence rate of SGD, possibly with some sort of averaging, might also be $\Ocal(1/T)$, and the known $\Ocal(\log(T)/T)$ result is just an artifact of the analysis. Indeed, the whole motivation of \cite{HazanKa11} was that the standard online analysis is too loose to analyze the stochastic setting properly. Perhaps a similar looseness applies to the analysis of SGD as well? This question has immediate practical relevance: if the new algorithms enjoy a better rate than SGD, it might indicate they will work better in practice, and that practitioners should abandon SGD in favor of them.

In this paper, we study the convergence rate of SGD for stochastic strongly convex problems, with the following contributions:
\begin{itemize}
    \item First, we extend known results to show that if $F$ is not only strongly convex, but also smooth (with respect to the optimum), then SGD with and without averaging achieves the optimal $\Ocal(1/T)$ convergence rate.
    \item We then show that for non-smooth $F$, there are cases where the convergence rate of SGD with averaging is $\Omega(\log(T)/T)$. In other words, the $\Ocal(\log(T)/T)$ bound for general strongly convex problems is real, and not just an artifact of the currently-known analysis.
    \item However, we show that one can recover the optimal $\Ocal(1/T)$ convergence rate by a simple modification of the averaging step: Instead of averaging of $T$ points, we only average the last $\alpha T$ points, where $\alpha\in (0,1)$ is arbitrary. Thus, to obtain an optimal rate, one does not need to use an algorithm significantly different than SGD, such as those discussed earlier.
    \item We perform an empirical study on both artificial and real-world data, which supports our findings.
\end{itemize}
Following the paradigm of \cite{HazanKa11}, we analyze the algorithm directly in the stochastic setting, and avoid an online analysis with an online-to-batch conversion. Our rate upper bounds are shown to hold in expectation, but we also sketch how we can obtain high-probability bounds (up to a $\log(\log(T))$ factor). While the focus here is on getting the optimal rate in terms of $T$, we note that our upper bounds are also optimal in terms of other standard problem parameters, such as the strong convexity parameter and the variance of the stochastic gradients.

In terms of related work, we note that the performance of SGD in a stochastic setting has been extensively researched in stochastic approximation theory (see for instance \cite{KushnerYin03}). However, these results are usually obtained under smoothness assumptions, and are often asymptotic, so we do not get an explicit bound in terms of $T$ which applies to our setting. We also note that a finite-sample analysis of SGD in the stochastic setting was recently presented in \cite{BachMoulines11}. However, the focus there was different than ours, and also obtained bounds which hold only in expectation rather than in high probability. More importantly, the analysis was carried out under stronger smoothness assumptions than our analysis, and to the best of our understanding, does not apply to general, possibly non-smooth, strongly convex stochastic optimization problems. For example, smoothness assumptions may not cover the application of SGD to support vector machines (as in \cite{ShaSiSreCo11}), since it uses a non-smooth loss function, and thus the underlying function $F$ we are trying to stochastically optimize may not be smooth.

\section{Preliminaries}\label{sec:prelim}

We use bold-face letters to denote vectors. Given some vector $\bw$, we use $w_i$ to denote its $i$-th coordinate. Similarly, given some indexed vector $\bw_t$, we let $w_{t,i}$ denote its $i$-th coordinate. We let $\mathbf{1}_{A}$ denote the indicator function for some event $A$.

We consider the standard setting of convex stochastic optimization, using first-order methods. Our goal is to minimize a convex function $F$ over some convex domain $\Wcal$ (which is assumed to be a subset of some Hilbert space). However, we do not know $F$, and the only information available is through a stochastic gradient oracle, which given some $\bw\in\Wcal$, produces a vector $\hat{\bg}$, whose expectation $\E[\hat{\bg}]=\bg$ is a subgradient of $F$ at $\bw$. Using a bounded number $T$ of calls to this oracle, we wish to find a point $\bw_T$ such that $F(\bw_t)$ is as small as possible. In particular, we will assume that $F$ attains a minimum at some $\bw^{*}\in\Wcal$, and our analysis provides bounds on $F(\bw_t)-F(\bw^{*})$ either in expectation or in high probability. The application of this framework to learning is straightforward (see for instance \cite{ShalShamSrebSri09b}): given a hypothesis class $\Wcal$ and a set of $T$ i.i.d. examples, we wish to find a predictor $\bw$ whose expected loss $F(\bw)$ is close to optimal over $\Wcal$. Since the examples are chosen i.i.d., the subgradient of the loss function with respect to any individual example can be shown to be an unbiased estimate of a subgradient of $F$.

We will focus on an important special case of the problem, characterized by $F$ being a \emph{strongly convex} function. Formally, we say that a function $F$ is \emph{$\lambda$-strongly convex}, if for all $\bw,\bw'\in \Wcal$ and any subgradient $\bg$ of $F$ at $\bw$,
\begin{equation}\label{eq:strconvex}
F(\bw')\geq F(\bw)+\inner{\bg,\bw'-\bw}+\frac{\lambda}{2}\norm{\bw'-\bw}^2.
\end{equation}

Another possible property of $F$ we will consider is smoothness, at least with respect to the optimum $\bw^*$. Formally, a function $F$ is \emph{$\mu$-smooth with respect to $\bw^*$} if for all $\bw\in\Wcal$,
\begin{equation}\label{eq:smoothness}
F(\bw) - F(\bw^*)\leq \frac{\mu}{2}\norm{\bw-\bw^*}^2.
\end{equation}
Such functions arise, for instance, in logistic and least-squares regression, and in general for learning linear predictors where the loss function has a Lipschitz-continuous gradient.

The algorithm we focus on is stochastic gradient descent (SGD). The SGD algorithm is parameterized by step sizes $\eta_1,\ldots,\eta_T$, and is defined as follows (below, we assume for simplicity that the algorithm is initialized at $\mathbf{0}\in \Wcal$, following common practice).
\begin{enumerate}
    \item Initialize $\bw_1=\mathbf{0}$
    \item For $t=1,\ldots,T$:
    \begin{itemize}
        \item Query the stochastic gradient oracle at $\bw_t$ to get a random $\hat{\bg}_t$ such that $\E[\hat{\bg}_t]=\bg_t$ is a subgradient of $F$ at $\bw_t$.
        \item Let $\bw_{t+1}=\Pi_{\Wcal}(\bw_t-\eta_t \hat{\bg}_t)$, where $\Pi_{\Wcal}$ is the projection operator on $\Wcal$.
    \end{itemize}
\end{enumerate}
This algorithm returns a sequence of points $\bw_1,\ldots,\bw_T$. To obtain a single point, one can use several strategies. Perhaps the simplest one is to return the last point, $\bw_{T+1}$. Another procedure, for which the standard online analysis of SGD applies \cite{HazAgKal07}, is to return the average point
\[
\bar{\bw}_T = \frac{1}{T}(\bw_1+\ldots+\bw_T).
\]

In terms of the step size, we note that the appropriate regime to consider is $\eta_t=\Theta(1/t)$ (see \appref{app:etat} for a fuller discussion of this). In particular, we will assume for the sake of our upper bounds that $\eta_t = 1/\lambda t$. This assumption simplifies the analysis substantially, while not losing much in terms of generality. To see why, suppose the step sizes are actually $c/\lambda t$ for some\footnote{If the step size is too small and $c$ is much smaller than $1$, the SGD analysis is known to fail (\cite{JuLaNeSha09}).} $c\geq 1$, and let $\lambda' = \lambda/c$. Then this step size is equivalent to $1/(\lambda't)$. Since any $\lambda$-strongly convex function is also $\lambda'$-strongly convex (as $\lambda \geq \lambda'$), then we can just analyze the algorithm's behavior as if we run it on a $\lambda'$-strongly convex function, using the default step size $1/\lambda' t$. If so desired, one can then substitute $\lambda/c$ instead of $\lambda'$ in the final bound, to see the upper bound in terms of $\lambda$ and $c$.

In general, we will assume that regardless of how the iterates evolve, it holds that $\E[\norm{\hat{\bg}_t}^2]\leq G^2$ for some fixed constant $G$. Note that this is a somewhat weaker assumption than \cite{HazanKa11}, which required that $\norm{\hat{\bg}_t}^2\leq G^2$ with probability $1$, since we focus here mostly on bounds which hold in expectation. These types of assumptions are common in the literature, and are generally implied by taking $\Wcal$ to be a bounded domain, or alternatively, assuming that $\bw_1$ is initialized not too far from $\bw^{*}$ and $F$ satisfies certain technical conditions (see for instance the proof of Theorem 1 in \cite{ShaSiSreCo11}).

Full proofs of our results are provided in \appref{app:proofs}.

\section{Smooth Functions}\label{sec:smooth}

We begin by considering the case where the expected function $F(\cdot)$ is both strongly convex and smooth with respect to $\bw^*$. Our starting point is to show a $\Ocal(1/T)$ for the \emph{last} point obtained by SGD. This result is well known in the literature (see for instance \cite{JuLaNeSha09}) and we include a proof for completeness. Later on, we will
show how to extend it to a high-probability bound.

\begin{theorem}\label{thm:last}
Suppose $F$ is $\lambda$-strongly convex and $\mu$-smooth with respect to $\bw^*$ over a convex set $\Wcal$, and that $\E[\norm{\hat{\bg}_t}^2]\leq G^2$. Then if we pick $\eta_t = 1/\lambda t$, it holds for any $T$ that
\[
\E[F(\bw_T)-F(\bw^{*})] \leq \frac{2\mu G^2}{\lambda^2 T}.
\]
\end{theorem}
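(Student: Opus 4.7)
The plan is to reduce the theorem to controlling $\E[\norm{\bw_T-\bw^*}^2]$, and then to establish an $\Ocal(1/T)$ rate on this quantity via an explicit recursion. The key conceptual point is that a naive use of strong convexity only yields $\Ocal(\log(T)/T)$ on $\E[\norm{\bw_T-\bw^*}^2]$, so strong convexity must be exploited in its tight form.

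First, smoothness immediately reduces the problem: $F(\bw_T)-F(\bw^*)\leq \frac{\mu}{2}\norm{\bw_T-\bw^*}^2$, so it suffices to prove $\E[\norm{\bw_T-\bw^*}^2]\leq 4G^2/(\lambda^2 T)$. Next I would set up the standard one-step recursion. Since the projection $\Pi_\Wcal$ is non-expansive and $\bw^*\in\Wcal$,
\[
\norm{\bw_{t+1}-\bw^*}^2 \leq \norm{\bw_t-\bw^*}^2 - 2\eta_t\inner{\hat\bg_t,\bw_t-\bw^*} + \eta_t^2\norm{\hat\bg_t}^2.
\]
Taking expectations (using $\E[\hat\bg_t\mid\bw_t]=\bg_t$ and $\E[\norm{\hat\bg_t}^2]\leq G^2$) yields
\[
\E[\norm{\bw_{t+1}-\bw^*}^2] \leq \E[\norm{\bw_t-\bw^*}^2] - 2\eta_t\E[\inner{\bg_t,\bw_t-\bw^*}] + \eta_t^2 G^2.
\]

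The crucial step is to lower-bound $\inner{\bg_t,\bw_t-\bw^*}$ using $\lambda$-strong convexity in its full form twice. From \eqref{eq:strconvex}, $\inner{\bg_t,\bw_t-\bw^*}\geq F(\bw_t)-F(\bw^*)+\frac{\lambda}{2}\norm{\bw_t-\bw^*}^2$, and a standard consequence of strong convexity combined with the optimality of $\bw^*$ over $\Wcal$ gives $F(\bw_t)-F(\bw^*)\geq \frac{\lambda}{2}\norm{\bw_t-\bw^*}^2$. Thus $\inner{\bg_t,\bw_t-\bw^*}\geq \lambda\norm{\bw_t-\bw^*}^2$. Plugging in $\eta_t=1/(\lambda t)$ gives the contractive recursion
\[
a_{t+1} \leq \left(1-\frac{2}{t}\right) a_t + \frac{G^2}{\lambda^2 t^2}, \qquad a_t := \E[\norm{\bw_t-\bw^*}^2].
\]
The factor $(1-2/t)$ rather than $(1-1/t)$ is exactly what makes the $\Ocal(1/T)$ rate achievable; this is the main conceptual obstacle and where the two uses of strong convexity earn their keep.

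Finally, I would prove $a_t\leq 4G^2/(\lambda^2 t)$ by induction. For the base case at $t=1$, I would use $\bw_1=\mathbf{0}$ together with strong convexity evaluated at $\bw_1$ and $\bw^*$ to argue $\inner{\bg_1,\bw^*}\leq -\tfrac{\lambda}{2}\norm{\bw^*}^2$, which plugged into the one-step inequality gives $a_2\leq G^2/\lambda^2\leq 2G^2/\lambda^2$; this also absorbs the degenerate case where the coefficient $(1-2/t)$ is non-positive. For the inductive step with $t\geq 2$, a direct calculation verifies
\[
(1-2/t)\cdot\frac{4G^2}{\lambda^2 t}+\frac{G^2}{\lambda^2 t^2} = \frac{G^2(4t-7)}{\lambda^2 t^2} \leq \frac{4G^2}{\lambda^2(t+1)},
\]
which is equivalent to $-3t-7\leq 0$. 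Combining $\E[\norm{\bw_T-\bw^*}^2]\leq 4G^2/(\lambda^2 T)$ with the smoothness reduction yields the claimed bound $\E[F(\bw_T)-F(\bw^*)]\leq 2\mu G^2/(\lambda^2 T)$.
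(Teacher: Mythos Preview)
Your proposal is correct and follows essentially the same approach as the paper: reduce via $\mu$-smoothness to bounding $\E[\norm{\bw_T-\bw^*}^2]$, derive the recursion $a_{t+1}\leq(1-2/t)a_t+G^2/(\lambda^2 t^2)$ by using strong convexity twice, and close with the induction $a_t\leq 4G^2/(\lambda^2 t)$. The only minor difference is the base case: the paper invokes \lemref{lem:distance}, which uses Cauchy--Schwarz on the subgradient to get $\E[\norm{\bw_1-\bw^*}^2]\leq 4G^2/\lambda^2$ for an arbitrary starting point (and hence also covers $T=1$), whereas you rely on the specific initialization $\bw_1=\mathbf{0}$ and only establish the bound from $t=2$ onward.
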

The theorem is an immediate corollary of the following key lemma, and the definition of $\mu$-smoothness with respect to $\bw^*$.
\begin{lemma}\label{lem:close}
Suppose $F$ is $\lambda$-strongly convex over a convex set $\Wcal$, and that $\E[\norm{\hat{\bg}_t}^2]\leq G^2$. Then if we pick $\eta_t = 1/\lambda t$, it holds for any $T$ that
\[
\E\left[\norm{\bw_T-\bw^{*}}^2\right] \leq \frac{4 G^2}{\lambda^2 T}.
\]
\end{lemma}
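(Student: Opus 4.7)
My plan is to work with $d_t := \E\|\bw_t - \bw^*\|^2$, derive the one-step recursion $d_{t+1} \le (1 - 2\lambda\eta_t) d_t + \eta_t^2 G^2$, and then verify $d_t \le 4G^2/(\lambda^2 t)$ by induction on $t$. The starting point is standard: from $\bw_{t+1} = \Pi_\Wcal(\bw_t - \eta_t \hat{\bg}_t)$ and the non-expansiveness of projection, expand the square and take conditional expectation given $\bw_t$ to obtain
\[
\E[\|\bw_{t+1} - \bw^*\|^2 \mid \bw_t] \le \|\bw_t - \bw^*\|^2 - 2\eta_t \langle \bg_t, \bw_t - \bw^*\rangle + \eta_t^2 \E[\|\hat{\bg}_t\|^2 \mid \bw_t].
\]
All the work then lies in lower-bounding the inner product.

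The crucial step is to show $\langle \bg_t, \bw_t - \bw^*\rangle \ge \lambda \|\bw_t - \bw^*\|^2$, with a full factor $\lambda$ rather than the $\lambda/2$ that a single application of \eqref{eq:strconvex} gives. I would derive it by combining two instances of \eqref{eq:strconvex}: (i) at $\bw = \bw_t$, $\bw' = \bw^*$ it gives $\langle \bg_t, \bw_t - \bw^*\rangle \ge F(\bw_t) - F(\bw^*) + (\lambda/2)\|\bw_t - \bw^*\|^2$, and (ii) at $\bw = \bw^*$, $\bw' = \bw_t$, together with the first-order optimality condition $\langle \bg^*, \bw_t - \bw^*\rangle \ge 0$ for the subgradient $\bg^*$ at $\bw^*$, it gives $F(\bw_t) - F(\bw^*) \ge (\lambda/2)\|\bw_t - \bw^*\|^2$. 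Adding the two produces the sharpened bound. Taking full expectation in the inequality above and using $\E\|\hat{\bg}_t\|^2 \le G^2$ then yields $d_{t+1} \le (1 - 2\lambda\eta_t) d_t + \eta_t^2 G^2$, which under $\eta_t = 1/(\lambda t)$ becomes $d_{t+1} \le (1 - 2/t) d_t + G^2/(\lambda^2 t^2)$.

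For the base case I would compute $d_2$ directly: starting from $\bw_1 = \mathbf{0}$, \eqref{eq:strconvex} with $\bw = \mathbf{0},\ \bw' = \bw^*$ together with $F(\bw^*) \le F(\mathbf{0})$ gives $\langle \bg_1, \bw^*\rangle \le -(\lambda/2)\|\bw^*\|^2$; plugging this into the expansion of $\E\|-\eta_1\hat{\bg}_1 - \bw^*\|^2$ the $\|\bw^*\|^2$ terms cancel and one is left with $d_2 \le G^2/\lambda^2 \le 4G^2/(\lambda^2 \cdot 2)$. For the inductive step $t \ge 2$ (where $1 - 2/t \ge 0$), substituting $d_t \le 4G^2/(\lambda^2 t)$ into the recursion gives $d_{t+1} \le G^2(4t-7)/(\lambda^2 t^2)$, and the elementary inequality $(4t-7)(t+1) = 4t^2 - 3t - 7 \le 4t^2$ closes the induction to $d_{t+1} \le 4G^2/(\lambda^2(t+1))$. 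The main obstacle is exactly the factor-of-two improvement in the preceding paragraph: the naive bound $\langle \bg_t, \bw_t - \bw^*\rangle \ge (\lambda/2)\|\bw_t - \bw^*\|^2$ instead yields the recursion $d_{t+1} \le (1 - 1/t) d_t + G^2/(\lambda^2 t^2)$, whose tight solution is $\Theta(\log T/T)$ and which admits no $C/t$ upper bound, so the whole argument pivots on strengthening that inequality.
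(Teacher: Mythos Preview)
Your proof is correct and follows the same approach as the paper: derive the recursion $d_{t+1}\le(1-2/t)\,d_t+G^2/(\lambda^2 t^2)$ via the sharpened inequality $\inner{\bg_t,\bw_t-\bw^*}\ge\lambda\norm{\bw_t-\bw^*}^2$ (obtained exactly as you describe, by combining two applications of strong convexity), and then close by induction. The only difference is the base case: you compute $d_2$ directly from the initialization $\bw_1=\mathbf{0}$, whereas the paper invokes a separate lemma (\lemref{lem:distance}) showing that the hypothesis $\E\norm{\hat{\bg}_1}^2\le G^2$ by itself forces $\E\norm{\bw_1-\bw^*}^2\le 4G^2/\lambda^2$ --- this is initialization-free and also covers the case $T=1$, which your write-up does not explicitly address.
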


We now turn to discuss the behavior of the average point $\bar{\bw}_T= (\bw_1+\ldots+\bw_T)/T$, and show that for smooth $F$, it also enjoys an optimal $\Ocal(1/T)$ convergence rate.

\begin{theorem}\label{thm:average}
Suppose $F$ is $\lambda$-strongly convex and $\mu$-smooth with respect to $\bw^*$ over a convex set $\Wcal$, and that $\E[\norm{\hat{\bg}_t}^2]\leq G^2$. Then if we pick $\eta_t = 1/\lambda t$,
\[
\E[F(\bar{\bw}_T)-F(\bw^{*})] \leq \frac{16 \mu G^2}{\lambda^2 T}.
\]
\end{theorem}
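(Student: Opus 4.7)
The plan is to combine the $\mu$-smoothness of $F$ with respect to $\bw^{*}$ with the already-established $\Ocal(1/T)$ bound on $\E[\|\bw_t-\bw^*\|^2]$ from \lemref{lem:close}. Concretely, I would first apply smoothness at $\bw_T^\star$ replaced by $\bar{\bw}_T$, namely
\[
F(\bar{\bw}_T) - F(\bw^*) \;\leq\; \frac{\mu}{2}\,\|\bar{\bw}_T - \bw^*\|^2,
\]
so that the entire task reduces to showing $\E\|\bar{\bw}_T - \bw^*\|^2 = \Ocal(1/T)$.

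Next I would express the averaged iterate as $\bar{\bw}_T - \bw^* = \frac{1}{T}\sum_{t=1}^T (\bw_t - \bw^*)$ and then be careful \emph{not} to invoke the convexity bound $\|\frac{1}{T}\sum (\bw_t-\bw^*)\|^2 \leq \frac{1}{T}\sum\|\bw_t-\bw^*\|^2$: after taking expectations and plugging in \lemref{lem:close} this would produce a harmful $\sum_{t=1}^T 1/t = \Theta(\log T)$ factor. Instead, I would apply the triangle inequality first and then square:
\[
\|\bar{\bw}_T - \bw^*\|^2 \;\leq\; \frac{1}{T^2}\left(\sum_{t=1}^T \|\bw_t - \bw^*\|\right)^{\!2}.
\]
Taking expectations and applying Cauchy--Schwarz to each cross term $\E[\|\bw_s-\bw^*\|\,\|\bw_t-\bw^*\|] \leq \sqrt{\E\|\bw_s-\bw^*\|^2}\sqrt{\E\|\bw_t-\bw^*\|^2}$ yields
\[
\E\|\bar{\bw}_T - \bw^*\|^2 \;\leq\; \frac{1}{T^2}\left(\sum_{t=1}^T \sqrt{\E\|\bw_t - \bw^*\|^2}\right)^{\!2}.
\]
Now I can safely plug in $\sqrt{\E\|\bw_t-\bw^*\|^2}\leq 2G/(\lambda\sqrt{t})$ from \lemref{lem:close}; the benign sum $\sum_{t=1}^T 1/\sqrt{t} \leq 2\sqrt{T}$ produces an $\Ocal(\sqrt{T})$ total, which after squaring and dividing by $T^2$ gives $\Ocal(1/T)$. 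Combined with the smoothness bound above, this yields the desired $\Ocal(\mu G^2/(\lambda^2 T))$ rate, with the constant easily bookkept to match (or beat) the stated $16$.

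The main obstacle here is really a subtle one: both the ``Jensen'' bound on $\|\cdot\|^2$ and the ``Cauchy--Schwarz then $\sqrt{\E(\cdot)^2}$'' bound look equally natural, but only the latter converts the harmonic sum $\sum 1/t$ into the square-root sum $\sum 1/\sqrt{t}$, and only that conversion saves the $\log T$ factor. Once one recognizes that smoothness lets us pass to the \emph{squared} distance (rather than the function gap itself), and that the squared distance should be bounded by the \emph{square of a sum of standard deviations} rather than a sum of variances, the rest is routine arithmetic.
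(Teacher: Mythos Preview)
Your argument is correct, and in fact it is exactly the ``rough proof intuition'' the paper states just below the theorem: bound the distance of $\bar{\bw}_T$ from $\bw^*$ by $(1/T)\sum_t 1/\sqrt{t}\approx 1/\sqrt{T}$ and then invoke smoothness. Your use of Cauchy--Schwarz on the cross terms (equivalently, Minkowski's inequality in $L^2$) is precisely what makes this intuition rigorous, and it yields $\E\|\bar{\bw}_T-\bw^*\|^2\leq 16G^2/(\lambda^2 T)$, hence a final constant of $8$ rather than $16$.

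Interestingly, the paper's \emph{formal} proof in the appendix does not follow its own intuition paragraph. Instead it proceeds recursively: it writes $\bar{\bw}_{t+1}-\bw^*=\tfrac{t}{t+1}(\bar{\bw}_t-\bw^*)+\tfrac{1}{t+1}(\bw_{t+1}-\bw^*)$, expands the square, bounds the cross term via Cauchy--Schwarz and \lemref{lem:close}, and then runs an induction on $t$ to establish $\E\|\bar{\bw}_T-\bw^*\|^2\leq 32G^2/(\lambda^2 T)$. Your direct route is shorter, avoids the induction, and gives a tighter constant; the paper's recursive decomposition has the minor advantage that it naturally produces a bound valid for every prefix average $\bar{\bw}_t$ along the way, though of course your argument does too once you replace $T$ by $t$. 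Substantively the two proofs use the same ingredients (\lemref{lem:close}, Cauchy--Schwarz on the cross terms, and smoothness at the end) and differ only in whether the sum over iterates is handled all at once or one step at a time.
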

A rough proof intuition is the following: \lemref{lem:close} implies that the Euclidean distance of $\bw_t$ from $\bw^{*}$ is on the order of $1/\sqrt{t}$, so the squared distance of $\bar{\bw}_T$ from $\bw^{*}$ is on the order of $((1/T)\sum_{t=1}^{T}1/\sqrt{t})^2 \approx 1/T$, and the rest follows from smoothness.

\section{Non-Smooth Functions}\label{sec:nonsmooth}

We now turn to the discuss the more general case where the function $F$ may not be smooth (i.e. there is no constant $\mu$ which satisfies \eqref{eq:smoothness} uniformly for all $\bw\in\Wcal$). In the context of learning, this may happen when we try to learn a predictor with respect to a non-smooth loss function, such as the hinge loss.

As discussed earlier, SGD with averaging is known to have a rate of at most $\Ocal(\log(T)/T)$. In the previous section, we saw that for smooth $F$, the rate is actually $\Ocal(1/T)$. Moreover, \cite{HazanKa11} showed that for using a different algorithm than SGD, one can obtain a rate of $\Ocal(1/T)$ even in the non-smooth case. This might lead us to believe that an $\Ocal(1/T)$ rate for SGD is possible in the non-smooth case, and that the $\Ocal(\log(T)/T)$ analysis is simply not tight.

However, this intuition turns out to be wrong. Below, we show that there are strongly convex stochastic optimization problems in Euclidean space, in which the convergence rate of SGD with averaging is lower bounded by $\Omega(\log(T)/T)$. Thus, the logarithm in the bound is not merely a shortcoming in the standard online analysis of SGD, but is really a property of the algorithm.

We begin with the following relatively simple example, which shows the essence of the idea. Let $F$ be the $1$-strongly convex function
\[
F(\bw) = \frac{1}{2}\norm{\bw}^2+w_1,
\]
over the domain $\Wcal=[0,1]^d$, which has a global minimum at $\mathbf{0}$. Suppose the stochastic gradient oracle, given a point $\bw_t$, returns the gradient estimate
$
\hat{\bg}_t = \bw_t+(Z_t,0,\ldots,0),
$
where $Z_t$ is uniformly distributed over $[-1,3]$. It is easily verified that $\E[\hat{\bg}_t]$ is a subgradient of $F(\bw_t)$, and that $\E[\norm{\hat{\bg}_t}^2]\leq d+5$ which is a bounded quantity for fixed $d$.

The following theorem implies in this case, the convergence rate of SGD with averaging has a $\Omega(\log(T)/T)$ lower bound. The intuition for this is that the global optimum lies at a corner of $\Wcal$, so SGD ``approaches'' it only from one direction. As a result, averaging the points returned by SGD actually hurts us.

\begin{theorem}\label{thm:simple}
Consider the strongly convex stochastic optimization problem presented above. If SGD is initialized at any point in $\Wcal$, and ran with $\eta_t = c/t$, then for any $T\geq T_0+1$, where $T_0=\max\{2,c/2\}$, we have
\[
\E[F(\bar{\bw}_{T})-F(\bw^{*})] ~\geq~\frac{c}{16T}\sum_{t=T_0}^{T-1}\frac{1}{t}.
\]
When $c$ is considered a constant, this lower bound is $\Omega(\log(T)/T)$.
\end{theorem}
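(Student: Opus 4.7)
Since $F(\bw^*)=0$ and the quadratic part $\tfrac{1}{2}\|\bw\|^2$ of $F$ is nonnegative, one has the pointwise bound $F(\bw)-F(\bw^*)\ge w_1$ for every $\bw\in\Wcal$. Applied to $\bar{\bw}_T$, this gives
\[
\E[F(\bar{\bw}_T)-F(\bw^*)] \;\ge\; \E[\bar{w}_{T,1}] \;=\; \frac{1}{T}\sum_{t=1}^{T}\E[w_{t,1}],
\]
so it suffices to lower bound $\sum_{t=1}^{T}\E[w_{t,1}]$. Because the stochastic gradient only perturbs the first coordinate, the sequence $v_t:=w_{t,1}$ evolves autonomously as
$v_{t+1}=\Pi_{[0,1]}\bigl((1-\eta_t)v_t-\eta_t Z_t\bigr)$
with $Z_t\sim\mathrm{Unif}[-1,3]$; the other coordinates only add nonnegative contributions when we later use $\|\bar{\bw}_T\|^2\ge 0$.

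The heart of the argument is a pointwise lower bound that captures how the projection at the corner $0$ reflects the negative excursions of the noise into a persistent positive bias. In the regime $\eta_t\le 1$ (that is, $t\ge c$), the argument of the projection never exceeds $1$, since $v_t\le 1$ and $(1-\eta_t)v_t-\eta_t Z_t\le(1-\eta_t)+\eta_t=1$; so the upper projection is inert. Moreover, since $v_t\ge 0$ and $1-\eta_t\ge 0$,
\[
v_{t+1} \;=\; \bigl((1-\eta_t)v_t-\eta_t Z_t\bigr)^+ \;\ge\; (-\eta_t Z_t)^+ \;=\; \eta_t\,(-Z_t)^+.
\]
Taking expectations and using $\E[(-Z_t)^+]=\int_{-1}^{0}(-z)/4\,dz=1/8$ yields $\E[v_{t+1}]\ge \eta_t/8=c/(8t)$. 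Summing from $t=c$ up to $T-1$,
\[
\frac{1}{T}\sum_{t=1}^{T}\E[v_t] \;\ge\; \frac{c}{8T}\sum_{t=c}^{T-1}\frac{1}{t},
\]
which is already $\Omega(\log T/T)$ when $c$ is a constant. The two-fold slack between $c/8$ and the stated constant $c/16$, together with the extension of the lower index from $c$ down to $T_0=\max\{2,c/2\}$, can be absorbed into the constant: the additional terms with $T_0\le t<c$ contribute at most an $O(1)$ amount, which is dominated by $\sum_{t=c}^{T-1}1/t$ for any $T$ sufficiently larger than $T_0$.

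\paragraph{Main obstacle.}
The substantive point is the projection-induced pointwise inequality $v_{t+1}\ge \eta_t(-Z_t)^+$: without the floor at $0$ the iterates would drift negatively in expectation, but the corner turns every negative realization of $Z_t$ into an $\Omega(\eta_t)$ jump that survives averaging. The cleanest form of this inequality requires $\eta_t\le 1$, so the main bookkeeping hurdle is the early transient $\eta_t>1$, where the mapping $v_t\mapsto\E[v_{t+1}\mid v_t]$ is \emph{decreasing} in $v_t$ and a worst-case $v_t$ near $1$ does not satisfy $\E[v_{t+1}]\ge \eta_t/8$. I would handle this by using the pointwise bound only for $t\ge c$, as above, and absorbing the remaining indices $T_0\le t<c$ into the gap between $c/8$ and $c/16$; a more refined case analysis of the projection could extend the $c/(8t)$ bound uniformly down to $t=T_0$ if sharper constants were desired.
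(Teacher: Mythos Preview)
Your argument is correct and follows essentially the same route as the paper's: reduce $F(\bar{\bw}_T)-F(\bw^*)$ to $\bar{w}_{T,1}$, isolate the first-coordinate recursion $w_{t+1,1}=\Pi_{[0,1]}((1-\eta_t)w_{t,1}-\eta_t Z_t)$, show $\E[w_{t+1,1}]\ge\text{const}\cdot\eta_t$ once $\eta_t$ is small by exploiting the projection at $0$, and sum. The only variation is cosmetic---the paper conditions on $\{Z_t\le -1/2\}$ (probability $1/8$) to obtain $w_{t+1,1}\ge\eta_t/2$ and hence $\E[w_{t+1,1}]\ge\eta_t/16$, while your pointwise inequality $v_{t+1}\ge\eta_t(-Z_t)^+$ integrated against the uniform law yields the slightly sharper $\E[v_{t+1}]\ge\eta_t/8$; your transient bookkeeping (restricting to $\eta_t\le 1$ and absorbing the early terms into the factor-of-two slack) is exactly the kind of issue the paper's choice of $T_0$ is meant to address.
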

While the lower bound scales with $c$, we remind the reader that one must pick $\eta_t=c/t$ with constant $c$ for an optimal convergence rate in general (see discussion in \secref{sec:prelim}).

This example is relatively straightforward but not fully satisfying, since it crucially relies on the fact that $\bw^*$ is on the border of $\Wcal$. In strongly convex problems, $\bw^*$ usually lies in the interior of $\Wcal$, so perhaps the $\Omega(\log(T)/T)$ lower bound does not hold in such cases. Our main result, presented below, shows that this is not the case, and that even if $\bw^*$ is well inside the interior of $\Wcal$, an $\Omega(\log(T)/T)$ rate for SGD with averaging can be unavoidable. The intuition is that we construct a non-smooth $F$, which forces $\bw_t$ to approach the optimum from just one direction, creating the same effect as in the previous example.

In particular, let $F$ be the $1$-strongly convex function
\[
F(\bw) = \frac{1}{2}\norm{\bw}^2+\begin{cases} w_1 & w_1\geq 0 \\ -7w_1 & w_1<0\end{cases},
\]
over the domain $\Wcal=[-1,1]^d$, which has a global minimum at $\mathbf{0}$. Suppose the stochastic gradient oracle, given a point $\bw_t$, returns the gradient estimate
\[
\hat{\bg}_t = \bw_t+\begin{cases} (Z_t,0,\ldots,0) & w_1 \geq 0 \\ (-7,0,\ldots,0) & w_1<0\end{cases},
\]
where $Z_t$ is a random variable uniformly distributed over $[-1,3]$. It is easily verified that $\E[\hat{\bg}_t]$ is a subgradient of $F(\bw_t)$, and that $\E[\norm{\hat{\bg}_t}^2]\leq d+63$ which is a bounded quantity for fixed $d$.

\begin{theorem}\label{thm:involved}
Consider the strongly convex stochastic optimization problem presented above. If SGD is initialized at any point $\bw_1$ with $w_{1,1}\geq 0$, and ran with $\eta_t = c/t$, then for any $T\geq T_0+2$, where $T_0=\max\{2,6c+1\}$, we have
\[
\E\left[F(\bar{\bw}_{T})-F(\bw^{*})\right] ~\geq~\frac{3c}{16 T}\sum_{t=T_0+2}^{T}\left(\frac{1}{t}\right)-\frac{T_0}{T}.
\]
When $c$ is considered a constant, this lower bound is $\Omega(\log(T)/T)$.
\end{theorem}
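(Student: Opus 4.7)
The plan is to reduce the problem to a one-dimensional lower bound on $\E[w_{t,1}]$ and then show that this first coordinate remains slightly positive in expectation, of order $c/t$. Writing $F(\bw) = \tfrac{1}{2}\|\bw\|^2 + g(w_1)$ with $g(x) = x$ for $x \geq 0$ and $g(x) = -7x$ for $x < 0$, we have $g(x) \geq x$ for all $x$ and $F(\bw^{*}) = 0$, so $F(\bar{\bw}_T) - F(\bw^{*}) \geq \bar{w}_{T,1}$. The theorem therefore reduces to a lower bound on $\E[\bar{w}_{T,1}] = T^{-1}\sum_{t=1}^T \E[w_{t,1}]$.

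Next I would analyze the one-dimensional update on the first coordinate. With $\eta_t = c/t$ and $t \geq T_0 := \max\{2, 6c+1\}$, the dynamics read $w_{t+1,1} = \Pi_{[-1,1]}((1-c/t)w_{t,1} - (c/t) Z_t)$ when $w_{t,1} \geq 0$, and $w_{t+1,1} = \Pi_{[-1,1]}((1-c/t)w_{t,1} + 7c/t)$ (deterministic) when $w_{t,1} < 0$. The critical mechanical fact---absent in the setup of Theorem~\ref{thm:simple}---is that negative visits cannot persist. Starting from any $w_{t,1} \geq 0$, the worst-case next iterate is $w_{t+1,1} \geq -3c/t$ (attained when $Z_t = 3$); starting from any such $w_{t+1,1} \in [-3c/t, 0)$, the deterministic step obeys $w_{t+2,1} \geq 7c/(t+1) - (1-c/(t+1)) \cdot 3c/t = \Omega(c/t) > 0$ once $t \geq T_0$. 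Hence negative visits are always isolated and are followed by a positive rebound of size $\Omega(c/t)$.

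The quantitative core is the lower bound $\E[w_{t,1}] \geq 3c/(16t)$ for $t \geq T_0 + 2$. Writing $p_t = \Prob{w_{t,1} < 0}$ and taking expectations in the two-case update yields the recursion
\[
\E[w_{t+1,1}] = (1-c/t)\,\E[w_{t,1}] + (c/t)(8 p_t - 1),
\]
but this alone cannot be iterated. Instead, I would combine (i) the asymmetry of the noise on the positive side---with probability $1/4$ one has $Z_t < 0$, producing an upward nudge of order $c/t$---with (ii) the deterministic $+7c/t$ rebound on the negative side, using the mechanical bounds above to control the magnitude of any negative iterates. Pooling the two contributions pins down the constant $3c/16$. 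Averaging $\E[w_{t,1}] \geq 3c/(16t)$ from $t = T_0 + 2$ to $T$ yields the main term $(3c/(16T))\sum_{t=T_0+2}^T 1/t$, while the initial $T_0+1$ iterations, bounded trivially by $w_{t,1} \geq -1$, contribute at most $-T_0/T$ after averaging, giving the stated inequality.

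The hardest step will be producing the $\Omega(c/t)$ lower bound on $\E[w_{t,1}]$ without tracking the full distribution of the iterate. The subtlety is that on the positive side the \emph{mean} one-step drift is \emph{negative}---the update has conditional expectation $(1-c/t)w - c/t$, which is below $w$ for small $w$---so any linear recursion on $\E[w_{t,1}]$ collapses to a useless nonnegativity statement. The needed positive contribution must be extracted from the noise asymmetry on the positive side together with the deterministic rebound on the negative side, and pooling these two effects consistently to pin down the sharp constant is the delicate technical step. A secondary, largely mechanical concern is verifying that the projection onto $[-1,1]$ does not affect the quantitative argument once $t \geq T_0$.
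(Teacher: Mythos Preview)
Your reduction to the first coordinate, the inequality $F(\bar{\bw}_T)-F(\bw^*)\geq \bar w_{T,1}$, and the ``isolated negative visits plus deterministic rebound'' mechanism are all correct and match the paper's Lemma (its \lemref{lem:above0}). The gap is in what you call the quantitative core. You aim for a \emph{pointwise-in-$t$} bound $\E[w_{t,1}]\geq 3c/(16t)$, correctly observe that your recursion $\E[w_{t+1,1}]=(1-c/t)\E[w_{t,1}]+(c/t)(8p_t-1)$ cannot be iterated because $p_t$ is uncontrolled, and then say you would ``pool'' the noise asymmetry with the rebound to get the constant $3/16$. That pooling step is never made concrete, and it is not clear a pointwise bound with this constant is even true; the negative one-step drift you point out on the positive side makes any per-iterate expectation bound genuinely delicate.

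The paper avoids this entirely by never proving a pointwise bound. It lower-bounds the \emph{sum} $\E\bigl[\sum_{t=T_0}^{T} w_{t,1}\bigr]$ directly via a pairing argument: whenever $w_{t,1}<0$, the rebound lemma gives $w_{t,1}+w_{t+1,1}\geq 3c/t$, so grouping each negative iterate with its successor and keeping iterates exceeding $c/t$ yields
\[
\E\Bigl[\sum_{t=T_0}^{T} w_{t,1}\Bigr]~\geq~\sum_{t=T_0}^{T}\Pr\bigl(w_{t,1}\notin[0,c/t]\bigr)\,\frac{c}{2t}.
\]
The remaining probability is handled by a length argument: conditional on $w_{t-1,1}\geq 0$, forcing $w_{t,1}\in[0,c/t]$ constrains $Z_{t-1}$ to an interval of length $1$, hence has probability at most $1/4$. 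A second use of the isolated-negative property (at least one of $w_{t-1,1}\geq 0$, $w_{t,1}\geq 0$ always holds) then converts this into the $3c/16$ constant for the sum. This is exactly the ``pooling'' you are looking for, but executed on the sum rather than on $\E[w_{t,1}]$; working at the level of the sum is what makes the pairing of a negative iterate with its rebound successor legitimate and sidesteps the negative-drift obstacle you identified.
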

We note that the requirement of $w_{1,1}\geq 0$ is just for convenience, and the analysis also carries through, with some second-order factors, if we let $w_{1,1}<0$.

\section{Recovering an $\Ocal(1/T)$ Rate for SGD with $\alpha$-Suffix Averaging}\label{sec:suffix}

In the previous section, we showed that SGD with averaging may have a rate of $\Omega(\log(T)/T)$ for non-smooth $F$. To get the optimal $\Ocal(1/T)$ rate for any $F$, we might turn to the algorithms of \cite{HazanKa11} and \cite{JudNes11}. However, these algorithms constitute  a significant departure from standard SGD. In this section, we show that it is actually possible to get an $\Ocal(1/T)$ rate using a much simpler modification of the algorithm: given the sequence of points $\bw_1,\ldots,\bw_T$ provided by SGD, instead of returning the average $\bar{\bw}_T = (\bw_1+\ldots+\bw_T)/T$, we average and return just a suffix, namely
\[
\bar{\bw}^{\alpha}_T = \frac{\bw_{(1-\alpha)T+1}+\ldots+\bw_T}{\alpha T},
\]
for some constant $\alpha\in (0,1)$ (assuming $\alpha T$ and $(1-\alpha)T$ are integers). We call this procedure \emph{$\alpha$-suffix averaging}.

\begin{theorem}\label{thm:alphaaveraging}
Consider SGD with $\alpha$-suffix averaging as described above, and with step sizes $\eta_t=1/\lambda t$. Suppose $F$ is $\lambda$-strongly convex, and that $\E[\norm{\hat{\bg}_t}^2]\leq G$ for all $t$. Then for any $T$, it holds that
\[
\E[F(\bar{\bw}^{\alpha}_{T})-F(\bw^*)] \leq \frac{2+2.5 \log\left(\frac{1}{1-\alpha}\right)}{\alpha}
\frac{G^2}{\lambda T}.
\]
\end{theorem}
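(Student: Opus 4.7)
The plan is to run the standard one-step SGD analysis for strongly convex functions, sum the resulting inequalities only over the suffix indices $t = (1-\alpha)T+1,\ldots,T$, and use \lemref{lem:close} to control the boundary term that appears.

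Starting from the projection non-expansivity applied to $\bw_{t+1} = \Pi_{\Wcal}(\bw_t - \eta_t \hat{\bg}_t)$ and taking expectations conditioned on $\bw_t$, together with the strong-convexity inequality \eqref{eq:strconvex}, one obtains the familiar per-step bound
\[
\E\bigl[F(\bw_t)-F(\bw^*)\bigr] \;\leq\; \frac{1}{2\eta_t}\bigl(S_t - S_{t+1}\bigr) \;+\; \frac{\eta_t}{2}\, G^2 \;-\; \frac{\lambda}{2}\, S_t,
\]
where $S_t := \E\norm{\bw_t-\bw^*}^2$. Plugging in $\eta_t = 1/(\lambda t)$ makes the coefficient structure collapse: $\tfrac{1}{2\eta_t}-\tfrac{\lambda}{2} = \tfrac{\lambda(t-1)}{2}$, so the right-hand side equals $\tfrac{\lambda(t-1)}{2}S_t - \tfrac{\lambda t}{2}S_{t+1} + \tfrac{G^2}{2\lambda t}$, which is the key identity that telescopes.

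Next I would sum this inequality from $t = (1-\alpha)T+1$ to $t=T$. The telescoped sum collapses to $\tfrac{\lambda(1-\alpha)T}{2}\,S_{(1-\alpha)T+1} - \tfrac{\lambda T}{2}S_{T+1}$; discarding the nonpositive endpoint and invoking \lemref{lem:close} at index $(1-\alpha)T+1$ bounds the remaining boundary term by a constant multiple of $G^2/\lambda$. The stochastic-gradient term contributes $\tfrac{G^2}{2\lambda}\sum_{t=(1-\alpha)T+1}^{T} 1/t$, which is bounded by $\tfrac{G^2}{2\lambda}\log\!\bigl(\tfrac{1}{1-\alpha}\bigr)$ via the standard integral comparison. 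Dividing by $\alpha T$ and applying Jensen's inequality to pull $F$ inside the suffix average yields
\[
\E\bigl[F(\bar{\bw}^{\alpha}_T) - F(\bw^*)\bigr] \;\leq\; \frac{1}{\alpha T}\Bigl(\tfrac{\lambda (1-\alpha)T}{2}\, S_{(1-\alpha)T+1} + \tfrac{G^2}{2\lambda}\log\!\tfrac{1}{1-\alpha}\Bigr),
\]
which, after substituting the bound from \lemref{lem:close}, delivers the claimed $O(1/(\alpha T))$ rate with logarithmic dependence on $1/(1-\alpha)$.

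The conceptual crux—and what makes $\alpha$-suffix averaging work—is that the strong-convexity correction $-\tfrac{\lambda}{2}S_t$ exactly reshuffles $\tfrac{1}{2\eta_t}(S_t-S_{t+1})$ into a telescoping form, so that summing over any contiguous block (not just $[1,T]$) leaves only a single boundary term $\tfrac{\lambda t_0}{2}S_{t_0+1}$ with $t_0=(1-\alpha)T$. The main obstacle is recognizing that this boundary term is tolerable precisely because $S_{t_0+1}$ already decays like $1/(\lambda^2 (1-\alpha)T)$ by \lemref{lem:close}, so the product remains $O(G^2/\lambda)$. In the full-average case ($\alpha=1$) this boundary term disappears, but the log-sum $\sum_{t=1}^T 1/t$ forces the extra $\log T$; suffix averaging trades the boundary term against a bounded harmonic-partial-sum, giving the optimal rate. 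Routine arithmetic on constants is what produces the explicit $\tfrac{2+2.5\log(1/(1-\alpha))}{\alpha}$ factor; tighter accounting of the boundary term and the harmonic tail will reproduce the stated numerical constants.
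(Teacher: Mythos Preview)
Your argument is correct, and it follows the same overall architecture as the paper's proof --- one-step SGD inequality, sum over the suffix, invoke \lemref{lem:close}, apply Jensen --- but with one organizational difference that is worth noting.

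The paper does \emph{not} feed the strong-convexity term $\tfrac{\lambda}{2}\norm{\bw_t-\bw^*}^2$ into the per-step inequality. It extracts only $\E[\inner{\bg_t,\bw_t-\bw^*}]$, lower-bounds it by $\E[F(\bw_t)-F(\bw^*)]$ via plain convexity, and after an Abel summation the right-hand side contains, besides the boundary term and the $\sum \eta_t$ term, a residual sum $\sum_{t}\E\norm{\bw_t-\bw^*}^2\bigl(\tfrac{1}{\eta_t}-\tfrac{1}{\eta_{t-1}}\bigr)$ over the whole suffix. \lemref{lem:close} is then applied at \emph{every} index in the suffix, turning this residual into a second harmonic tail $\sum_t 1/t$; that is where the coefficient $2.5=2+0.5$ in front of $\log\tfrac{1}{1-\alpha}$ comes from.

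Your route uses strong convexity directly, so that $\tfrac{1}{2\eta_t}S_t-\tfrac{\lambda}{2}S_t=\tfrac{\lambda(t-1)}{2}S_t$ pairs with $-\tfrac{\lambda t}{2}S_{t+1}$ into an exact telescope. Consequently you need \lemref{lem:close} only once, at the single boundary index $(1-\alpha)T+1$, and you get only one harmonic tail. Carrying your arithmetic through gives the sharper constant $2+0.5\log\tfrac{1}{1-\alpha}$ rather than the paper's $2+2.5\log\tfrac{1}{1-\alpha}$; so your closing remark that ``tighter accounting will reproduce the stated numerical constants'' is actually inverted --- your telescoping already beats the paper's constants.
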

Note that for any constant $\alpha\in (0,1)$, the bound above is $\Ocal(G^2/\lambda T)$. This matches the optimal guarantees in \cite{HazanKa11} up to constant factors. However, this is shown for standard SGD, as opposed to the more specialized algorithm of \cite{HazanKa11}. Also, it is interesting to note that this bound is comparable to the bound of \thmref{thm:last} for the last iterate, when $F$ is also smooth, as long as $\mu/\lambda = \Ocal(1)$. However, \thmref{thm:last} degrades as the function becomes less smooth. In contrast, \thmref{thm:alphaaveraging} implies that with an averaging scheme, we get an optimal rate even if the function is not smooth. Finally, we note that it might be tempting to use \thmref{thm:alphaaveraging} as a guide to choose the averaging window, by optimizing the bound for $\alpha$ (which turns out to be $\alpha\approx 0.65$). However, we note that the optimal value of $\alpha$ is dependent on the constants in the bound, which may not be the tightest or most ``correct'' ones.

\begin{proof}[Proof Sketch]
The proof combines the analysis of online gradient descent \cite{HazAgKal07} and \lemref{lem:close}. In particular, starting as in the proof of \lemref{lem:close}, and extracting the inner products, we get
\begin{align}
&\sum_{t=(1-\alpha) T+1}^{T}\E[\inner{\bg_t,\bw_t-\bw^{*}}]
~\leq~ \sum_{t=(1-\alpha) T+1}^{T}\frac{\eta_t G^2}{2} +\notag\\
&~~
\sum_{t=(1-\alpha) T+1}^{T}\left(\frac{\E[\norm{\bw_t-\bw^{*}}^2]}{2\eta_t}-\frac{\E[\norm{\bw_{t+1}-\bw^{*}}^2]}{2\eta_t}\right).
\label{eq:half1}
\end{align}
Rearranging the r.h.s., and using the convexity of $F$ to relate the l.h.s. to $\E[F(\bar{\bw}^{\alpha}_{T})-F(\bw^{*})]$, we get a convergence upper bound of
\begin{align*}
\frac{1}{2\alpha T}&\left(\frac{\E[\norm{\bw_{(1-\alpha)T+1}
-\bw^*}^2]}{\eta_{(1-\alpha)T+1}}+G^2\sum_{t=(1-\alpha)T+1}^{T}\eta_t
\right. \\
&+
\left.
\sum_{t=(1-\alpha)T+1}^{T}\E[\norm{\bw_t-\bw^*}^2]
\left(\frac{1}{\eta_t}-\frac{1}{\eta_{t-1}}\right)\right).
\end{align*}
\lemref{lem:close} tells us that with any strongly convex $F$, even non-smooth, we have $\E[\norm{\bw_t-\bw^{*}}^2]\leq \Ocal(1/t)$. Plugging this in and performing a few more manipulations, the result follows.
\end{proof}

One potential disadvantage of suffix averaging is that if we cannot store all the iterates $\bw_t$ in memory, then we need to know from which iterate $\alpha T$ to start computing the suffix average (in contrast, standard averaging can be computed ``on-the-fly'' without knowing the stopping time $T$ in advance). However, even if $T$ is not known, this can be addressed in several ways. For example, since our results are robust to the value of $\alpha$, it is really enough to guess when we passed some ``constant'' portion of all iterates. Alternatively, one can divide the rounds into exponentially increasing epochs, and maintain the average just of the current epoch. Such an average would always correspond to a constant-portion suffix of all iterates.

\section{High-Probability Bounds}\label{sec:high_probability}

All our previous bounds were on the \emph{expected} suboptimality $\E[F(\bw)-F(\bw^{*})]$ of an appropriate predictor $\bw$. We now outline how these results can be strengthened to bounds on $F(\bw_t)-F(\bw^{*})$ which hold with arbitrarily high probability $1-\delta$, with the bound depending logarithmically on $\delta$. Compared to our in-expectation bounds, they have an additional mild $\log(\log(T))$ factor (interestingly, a similar factor also appears in the analysis of \cite{HazanKa11}, and we do not know if it is necessary). The key result is the following strengthening of \lemref{lem:close}, under slightly stronger technical conditions.

\begin{proposition}\label{prop:closehighprob}
Let $\delta \in (0,1/e)$ and assume $T\geq 4$. Suppose $F$ is $\lambda$-strongly convex over a convex set $\Wcal$, and that $\norm{\hat{\bg}_t}^2\leq G^2$ with probability $1$. Then if we pick $\eta_t = 1/\lambda t$, it holds with probability at least $1-\delta$ that for any $t\leq T$,
\[
\norm{\bw_t-\bw^{*}}^2 \leq \frac{(624\log(\log(T)/\delta)+1)G^2}{\lambda^2 t}.
\]
\end{proposition}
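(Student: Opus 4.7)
The plan is to mirror the in-expectation argument of \lemref{lem:close} at the one-step level, then control the stochastic fluctuations with a Freedman-type martingale concentration, and extract the $\log(\log(T)/\delta)$ factor via a dyadic peeling over the time horizon.

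First I would reproduce the one-step inequality used in the proof of \lemref{lem:close}: expanding the projected SGD update, invoking non-expansiveness of $\Pi_\Wcal$ and the sharpened strong convexity bound $\langle \bg_t, \bw_t - \bw^*\rangle \geq \lambda\norm{\bw_t - \bw^*}^2$ (which follows from combining the subgradient inequality with the first-order optimality of $\bw^*$), and substituting $\eta_t = 1/(\lambda t)$, gives
\[
M_{t+1} \;\leq\; (1-2/t)\,M_t \;+\; \frac{2}{\lambda t}\,\xi_t \;+\; \frac{G^2}{\lambda^2 t^2},
\]
where $M_t := \norm{\bw_t-\bw^*}^2$ and $\xi_t := \langle \bg_t - \hat{\bg}_t, \bw_t - \bw^*\rangle$ is a martingale difference adapted to the natural filtration $\Fcal_{t-1}$. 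Under $\norm{\hat\bg_t}\leq G$ a.s.\ one further has $|\xi_t|\leq 2G\sqrt{M_t}$ and $\E[\xi_t^2\mid\Fcal_{t-1}]\leq G^2 M_t$: the noise is self-bounded by the very quantity to be controlled. Multiplying by $t(t-1)$ and telescoping yields the closed form
\[
M_{t+1} \;\leq\; \frac{G^2}{\lambda^2 (t-1)} \;+\; \frac{2}{\lambda\,t(t-1)}\sum_{s=2}^{t}(s-1)\,\xi_s .
\]

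Next I would control $\sup_{t\leq T} t M_t$ by introducing the stopping time $\tau := \min\{t\leq T : M_t > C/t\}$ with $C := c\,G^2\log(\log(T)/\delta)/\lambda^2$ for an absolute constant $c$ to be fixed, and argue that $\Prob{\tau \leq T}\leq \delta$. Localising to $\{s<\tau\}$ converts the self-bounded bounds into deterministic ones: $|(s-1)\xi_s|\leq 2G\sqrt{Cs}$ and $\E[((s-1)\xi_s)^2\mid\Fcal_{s-1}]\leq G^2 C s$. To avoid a spurious $\log T$ factor, I would partition $\{1,\dots,T\}$ into $K := \lceil\log_2 T\rceil$ dyadic blocks $I_k = [2^{k-1}, 2^k)$ and apply Freedman's inequality to the weighted partial sums over each block, at failure probability $\delta/K$. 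On block $I_k$, the cumulative conditional variance is $O(G^2 C\cdot 4^k)$ and the maximal jump is $O(G\sqrt{C\cdot 2^k})$, so Freedman produces a block-wise deviation of order $2^k\,G\sqrt{C\log(K/\delta)}$. Plugged into the closed form, divided by $\lambda t(t-1)\asymp \lambda\cdot 4^k$, and combined with the drift $G^2/(\lambda^2(t-1))$, this yields an $M_{t+1}$-bound of order $G\sqrt{C\log(K/\delta)}/(\lambda t) + G^2/(\lambda^2 t)$; choosing $c$ large enough so that both terms are $\leq C/(2t)$ closes the induction and gives $\Prob{\tau\leq T}\leq \delta$.

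The main obstacle is the circular self-bounded dependence of $\xi_t$ on $M_t$: the bound we are trying to prove appears in the bounds on the noise. The stopping-time localisation is what breaks the circularity, converting the data-dependent almost-sure and variance bounds into deterministic ones on the good event $\{t<\tau\}$, while coinciding with the original martingale there. The secondary obstacle is to confine the union-bound cost to $\log K = \log\log T$ rather than $\log T$; this is exactly what the dyadic peeling achieves, because within each doubling block the Freedman contribution and the drift term scale the same way in $k$, so a single global constant $c = \Theta(1)$ suffices uniformly across all $K$ blocks, and the $\log(K/\delta)$ paid by the union bound matches the target $\log(\log(T)/\delta)$ in $C$.
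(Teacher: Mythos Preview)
Your proposal is sound and arrives at the same conclusion, but the route differs from the paper's in two substantive ways.

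\textbf{Breaking the self-reference.} You localise with a stopping time $\tau$, so that on $\{s<\tau\}$ both the a.s.\ and variance bounds on $(s-1)\xi_s$ become deterministic functions of $C$; the contradiction at $t=\tau$ then closes the argument. The paper instead observes that $\norm{\hat\bg_t}\leq G$ together with strong convexity already forces $\norm{\bw_t-\bw^*}\leq 2G/\lambda$ deterministically (their \lemref{lem:uniformupperbound}), giving an a.s.\ increment bound $|(i-1)Z_i|\leq 4G^2(i-1)/\lambda$ that does \emph{not} involve $C$. Only the variance proxy $\sigma_t^2=4G^2\sum_{i\le t}(i-1)^2\norm{\bw_i-\bw^*}^2$ remains data-dependent, and the paper plugs the induction hypothesis $\norm{\bw_i-\bw^*}^2\leq a/i$ into it \emph{after} the concentration step, solving a quadratic in $\sqrt a$. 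Your device is more portable (it does not rely on the crude diameter bound); the paper's is shorter and avoids the stopping-time bookkeeping.

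\textbf{Where the $\log\log T$ comes from.} You peel dyadically over the \emph{time} axis and apply vanilla Freedman block by block. The paper instead invokes a single self-normalised Freedman inequality (their \lemref{lem:fromFreedman}) that peels over the \emph{magnitude of $\sigma_s$} and returns, uniformly in $s\le T$, a bound $2\max\{2\sigma_s,\,b\sqrt{\log(1/\delta)}\}\sqrt{\log(1/\delta)}$ at cost $\log(T)\delta$. Both mechanisms yield the $\log(\log(T)/\delta)$ factor; the paper's has the advantage of being one off-the-shelf lemma.

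One technical caution on your route: when you write ``Freedman produces a block-wise deviation of order $2^k G\sqrt{C\log(K/\delta)}$'', you are keeping only the variance term. The Bernstein correction contributes an additional $b_k\log(K/\delta)\asymp G\sqrt{C\,2^k}\,\log(K/\delta)$, which after division by $\lambda t(t-1)\asymp\lambda 4^k$ gives $G\sqrt{C}\,\log(K/\delta)/(\lambda\,2^{3k/2})$. For $2^k\lesssim \log(K/\delta)$ this term dominates and would force $C\asymp(\log(\log T/\delta))^2$ if taken at face value. The fix is immediate---for those small $t$ the deterministic bound $M_t\le 4G^2/\lambda^2$ already gives $M_t\le C/t$ once $C\gtrsim G^2\log(\log T/\delta)/\lambda^2$---but you should say so explicitly, and you will need the diameter bound $\norm{\bw_t-\bw^*}\le 2G/\lambda$ (or an equivalent base case) after all.
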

To obtain high probability versions of \thmref{thm:last}, \thmref{thm:average}, and \thmref{thm:alphaaveraging}, one needs to use this lemma in lieu of \lemref{lem:close} in their proofs. This leads overall to rates of the form $\Ocal(\log(\log(T)/\delta)/T)$ which hold with probability $1-\delta$.

\section{Experiments}\label{sec:experiments}

We now turn to empirically study how the algorithms behave, and compare it to our theoretical findings.

We studied the following four algorithms:
\begin{enumerate}
    \item \textsc{Sgd-A}: Performing SGD and then returning the average point over all $T$ rounds.
    \item \textsc{Sgd-$\alpha$}: Performing SGD with $\alpha$-suffix averaging. We chose $\alpha=1/2$ - namely, we return the average point over the last $T/2$ rounds.
    \item \textsc{Sgd-L}: Performing SGD and returning the point obtained in the last round.
    \item \textsc{Epoch-Gd}: The optimal algorithm of \cite{HazanKa11} for strongly convex stochastic optimization.
\end{enumerate}

First, as a simple sanity check, we measured the performance of these algorithms on a simple, strongly convex stochastic optimization problem, which is also smooth. We define $\Wcal=[-1,1]^5$, and $F(\bw) = \norm{\bw}^2$. The stochastic gradient oracle, given a point $\bw$, returns the stochastic gradient $\bw+\bz$ where $\bz$ is uniformly distributed in $[-1,1]^5$. Clearly, this is an unbiased estimate of the gradient of $F$ at $\bw$. The initial point $\bw_1$ of all 4 algorithms was chosen uniformly at random from $\Wcal$. The results are presented in \figref{fig:artificialsmooth}, and it is clear that all 4 algorithms indeed achieve a $\Theta(1/T)$ rate, matching our theoretical analysis (\thmref{thm:last}, \thmref{thm:average} and \thmref{thm:alphaaveraging}). The results also seem to indicate that \textsc{Sgd-A} has a somewhat worse performance in terms of leading constants.

\begin{figure}
\begin{center}
\includegraphics[scale=0.4]{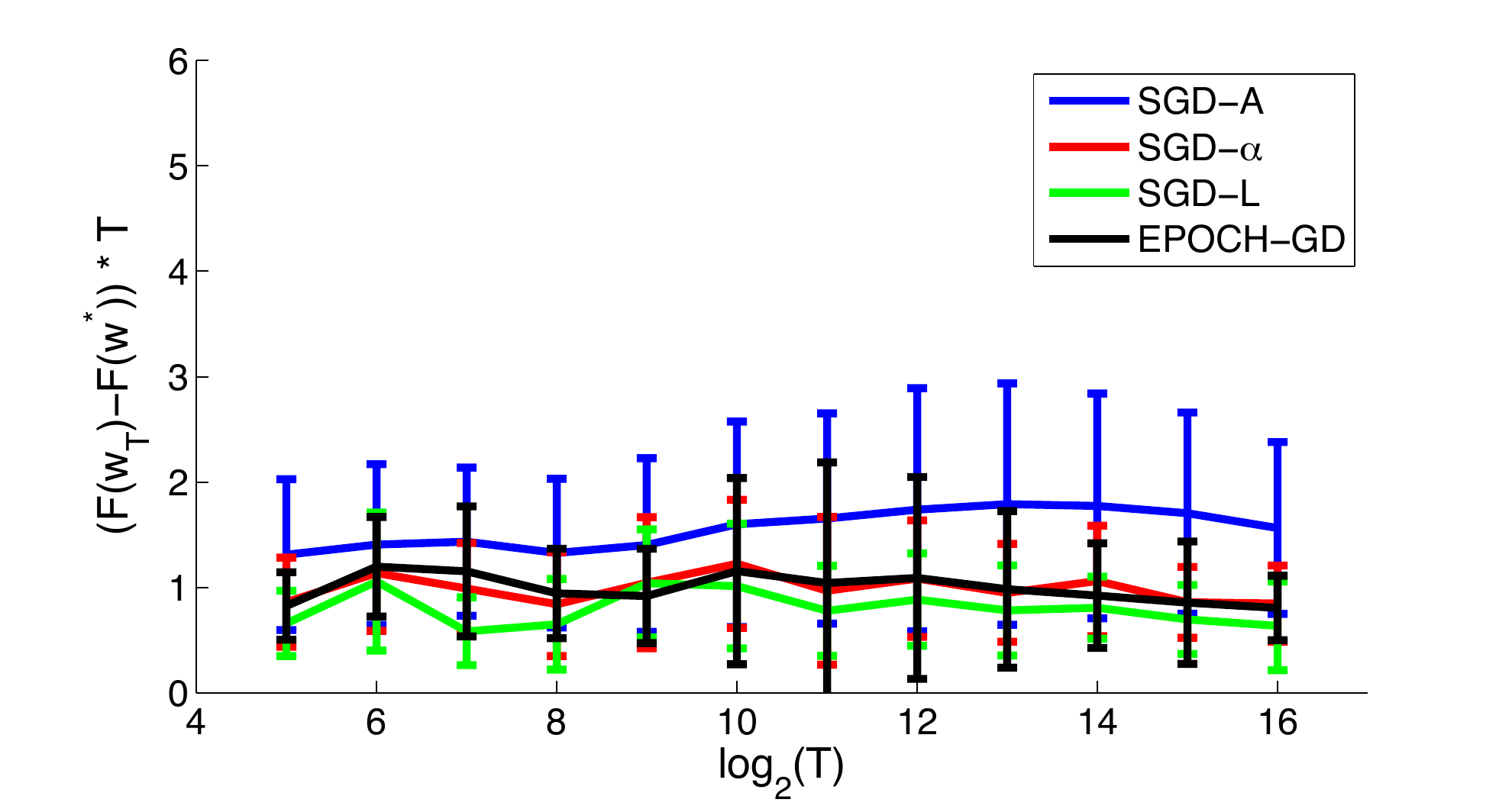}
\caption{Results for smooth strongly convex stochastic optimization problem. The experiment was repeated $10$ times, and we report the mean and standard deviation for each choice of $T$. The X-axis is the log-number of rounds $\log(T)$, and the Y-axis is $(F(\bw_T)-F(\bw^*))*T$. The scaling by $T$ means that a roughly constant graph corresponds to a $\Theta(1/T)$ rate, whereas a linearly increasing graph corresponds to a $\Theta(\log(T)/T)$ rate.}
\label{fig:artificialsmooth}
\end{center}
\end{figure}

Second, as another simple experiment, we measured the performance of the algorithms on the non-smooth, strongly convex problem described in the proof of \thmref{thm:involved}. In particular, we simulated this problem with $d=5$, and picked $\bw_1$ uniformly at random from $\Wcal$. The results are presented in \figref{fig:artificial}. As our theory indicates, \textsc{Sgd-A} seems to have an $\Theta(\log(T)/T)$ convergence rate, whereas the other 3 algorithms all seem to have the optimal $\Theta(1/T)$ convergence rate. Among these algorithms, the SGD variants \textsc{Sgd-L} and \textsc{Sgd-$\alpha$} seem to perform somewhat better than \textsc{Epoch-Gd}. Also, while the average performance of \textsc{Sgd-L} and \textsc{Sgd-$\alpha$} are similar, \textsc{Sgd-$\alpha$} has less variance. This is reasonable, considering the fact that \textsc{Sgd-$\alpha$} returns an average of many points, whereas \textsc{Sgd-L} return only the very last point.

\begin{figure}
\begin{center}
\includegraphics[scale=0.4]{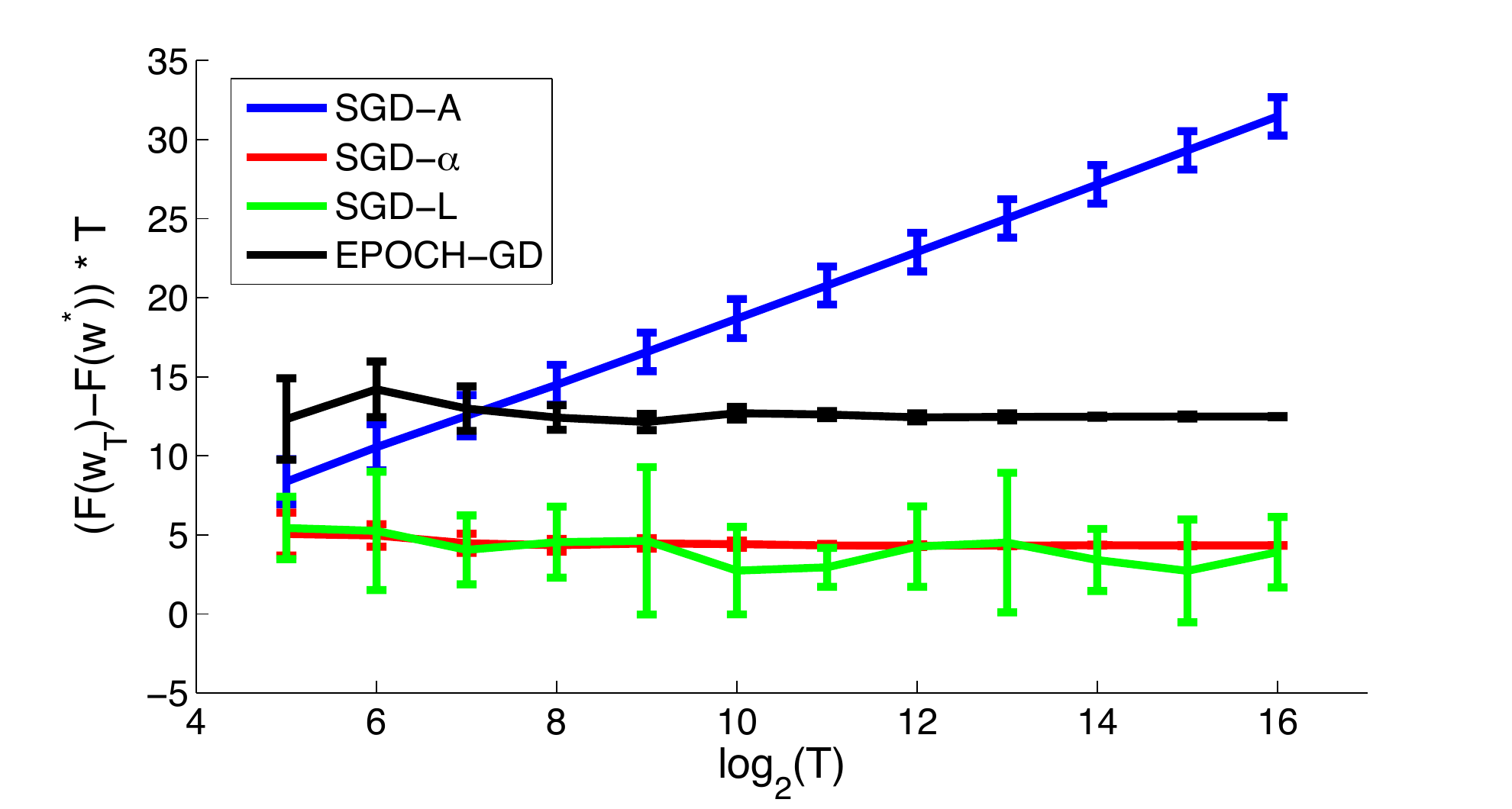}
\caption{Results for the non-smooth strongly convex stochastic optimization problem. The experiment was repeated $10$ times, and we report the mean and standard deviation for each choice of $T$. The X-axis is the log-number of rounds $\log(T)$, and the Y-axis is $(F(\bw_T)-F(\bw^*))*T$. The scaling by $T$ means that a roughly constant graph corresponds to a $\Theta(1/T)$ rate, whereas a linearly increasing graph corresponds to a $\Theta(\log(T)/T)$ rate.}
\label{fig:artificial}
\end{center}
\end{figure}

Finally, we performed a set of experiments on real-world data. We used the same 3 binary classification datasets (\textsc{ccat},\textsc{cov1} and \textsc{astro-ph}) used by \cite{ShaSiSreCo11} and \cite{Joachims06}, to test the performance of optimization algorithms for Support Vector Machines using linear kernels. Each of these datasets is composed of a training set and a test set. Given a training set of instance-label pairs, $\{\bx_i,y_i\}_{i=1}^{m}$, we defined $F$ to be the standard (non-smooth) objective function of Support Vector Machines, namely
\begin{equation}\label{eq:obj}
F(\bw) = \frac{\lambda}{2}\norm{\bw}^2 + \frac{1}{m}\sum_{i=1}^{m}\max\{0,1-y_i\inner{\bx_i,\bw}\}.
\end{equation}
Following \cite{ShaSiSreCo11} and \cite{Joachims06}, we took $\lambda=10^{-4}$ for \textsc{ccat}, $\lambda=10^{-6}$ for \textsc{cov1}, and $\lambda=5\times 10^{-5}$ for \textsc{astro-ph}. The stochastic gradient given $\bw_t$ was computed by taking a single randomly drawn training example $(\bx_i,y_i)$, and computing the gradient with respect to that example, namely
\[
\hat{\bg}_t = \lambda \bw_t - \mathbf{1}_{y_i\inner{\bx_i,\bw_t}\leq 1}y_i \bx_i.
\]
Each dataset comes with a separate test set, and we also report the objective function value with respect to that set (as in \eqref{eq:obj}, this time with $\{\bx_i,y_i\}$ representing the test set examples). All algorithms were initialized at $\bw_1=0$, with $\Wcal=\reals^d$ (i.e. no projections were performed - see the discussion in \secref{sec:prelim}).

The results of the experiments are presented in \figref{fig:astro},\figref{fig:ccat} and \figref{fig:cov}. In all experiments, \textsc{Sgd-A} performed the worst. The other 3 algorithms performed rather similarly, with \textsc{Sgd-$\alpha$} being slightly better on the \textsc{Cov1} dataset, and \textsc{Sgd-L} being slightly better on the other 2 datasets.

\begin{figure}
\begin{center}
\includegraphics[scale=0.4]{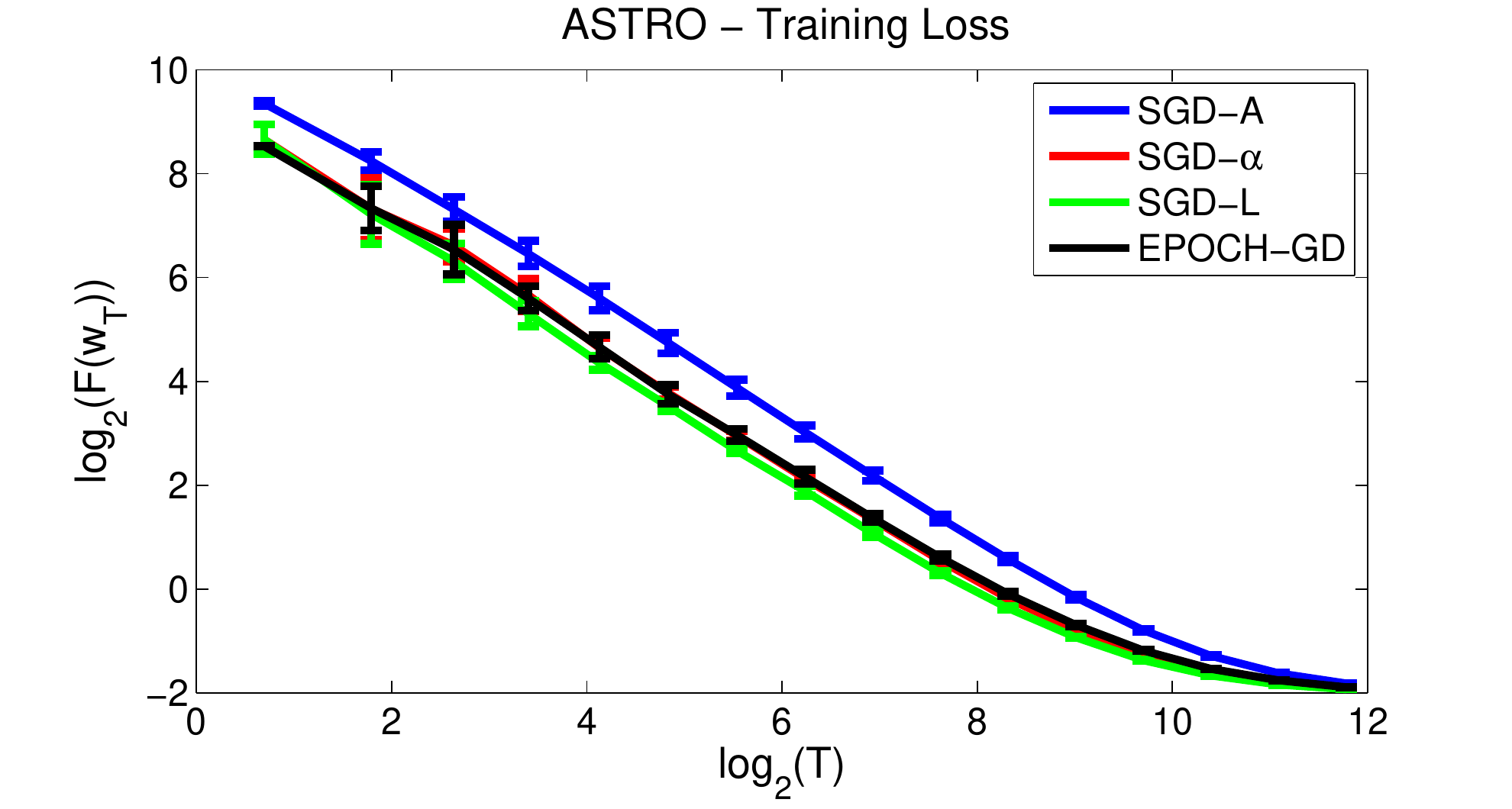}
\includegraphics[scale=0.4]{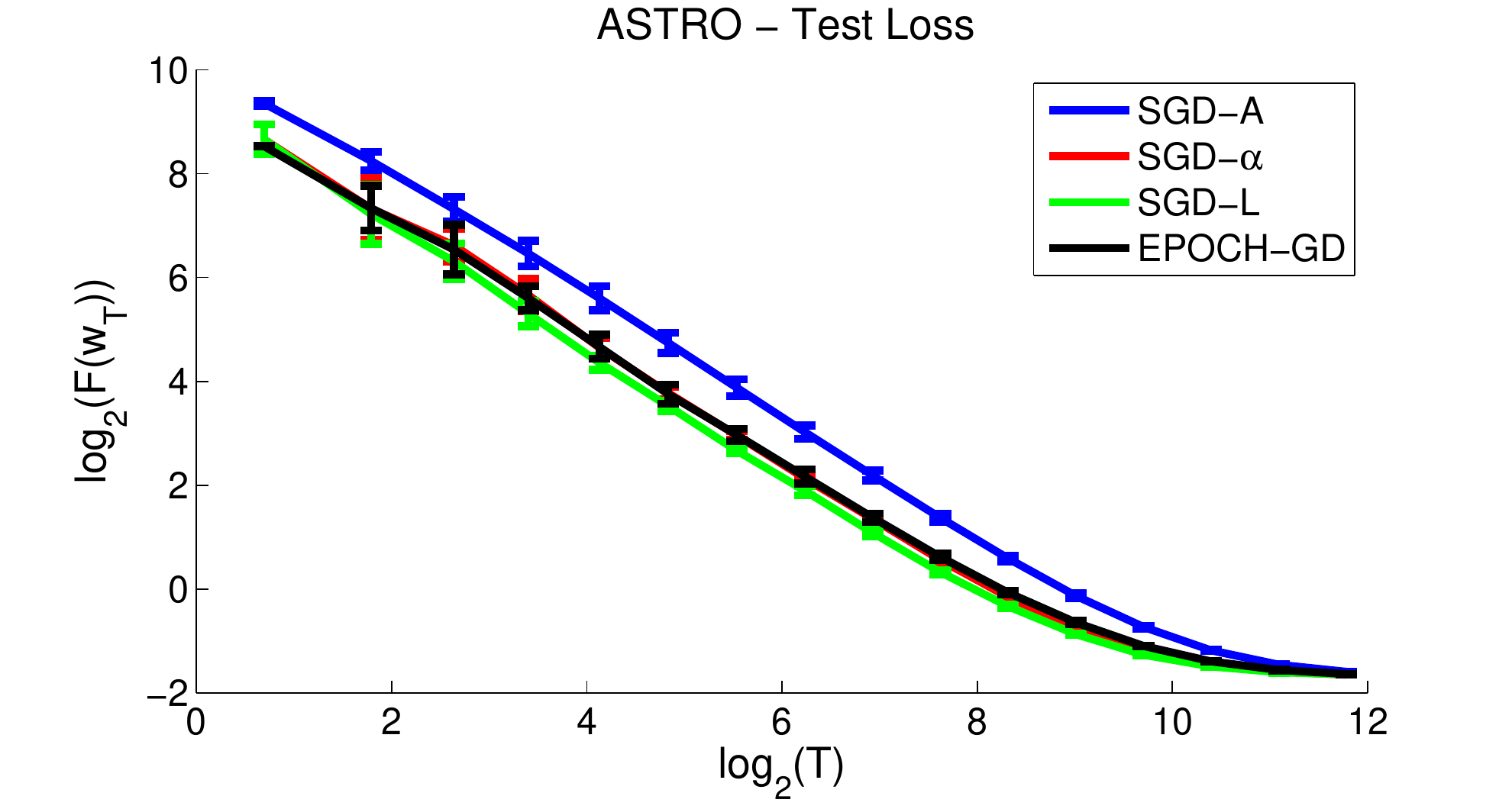}
\caption{Results for the \textsc{astro-ph} dataset. The left row refers to the average loss on the training data, and the right row refers to the average loss on the test data. Each experiment was repeated $10$ times, and we report the mean and standard deviation for each choice of $T$. The X-axis is the log-number of rounds $\log(T)$, and the Y-axis is the log of the objective function $\log(F(\bw_T))$.}
\label{fig:astro}
\end{center}
\end{figure}

\begin{figure}
\begin{center}
\includegraphics[scale=0.4]{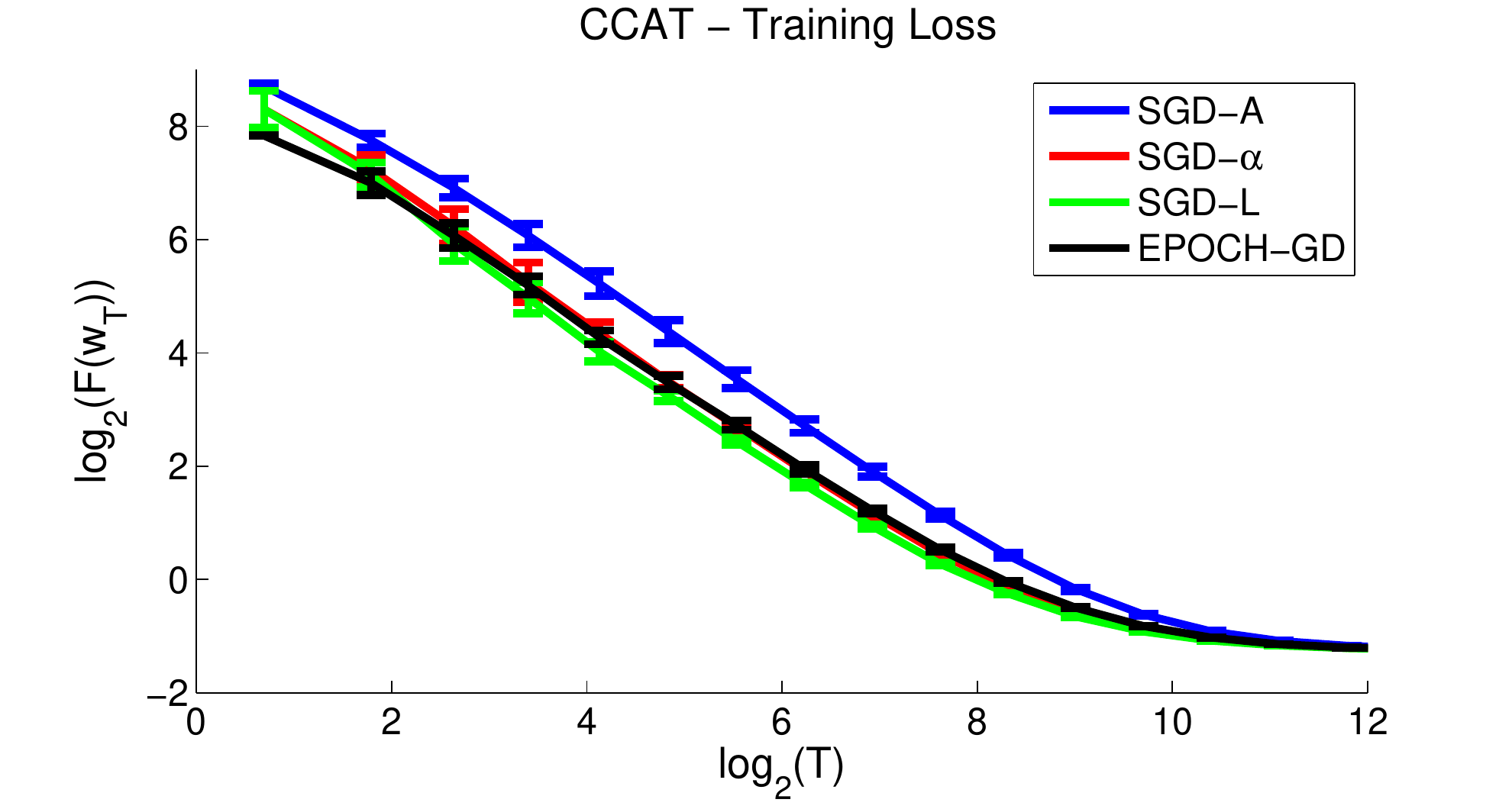}
\includegraphics[scale=0.4]{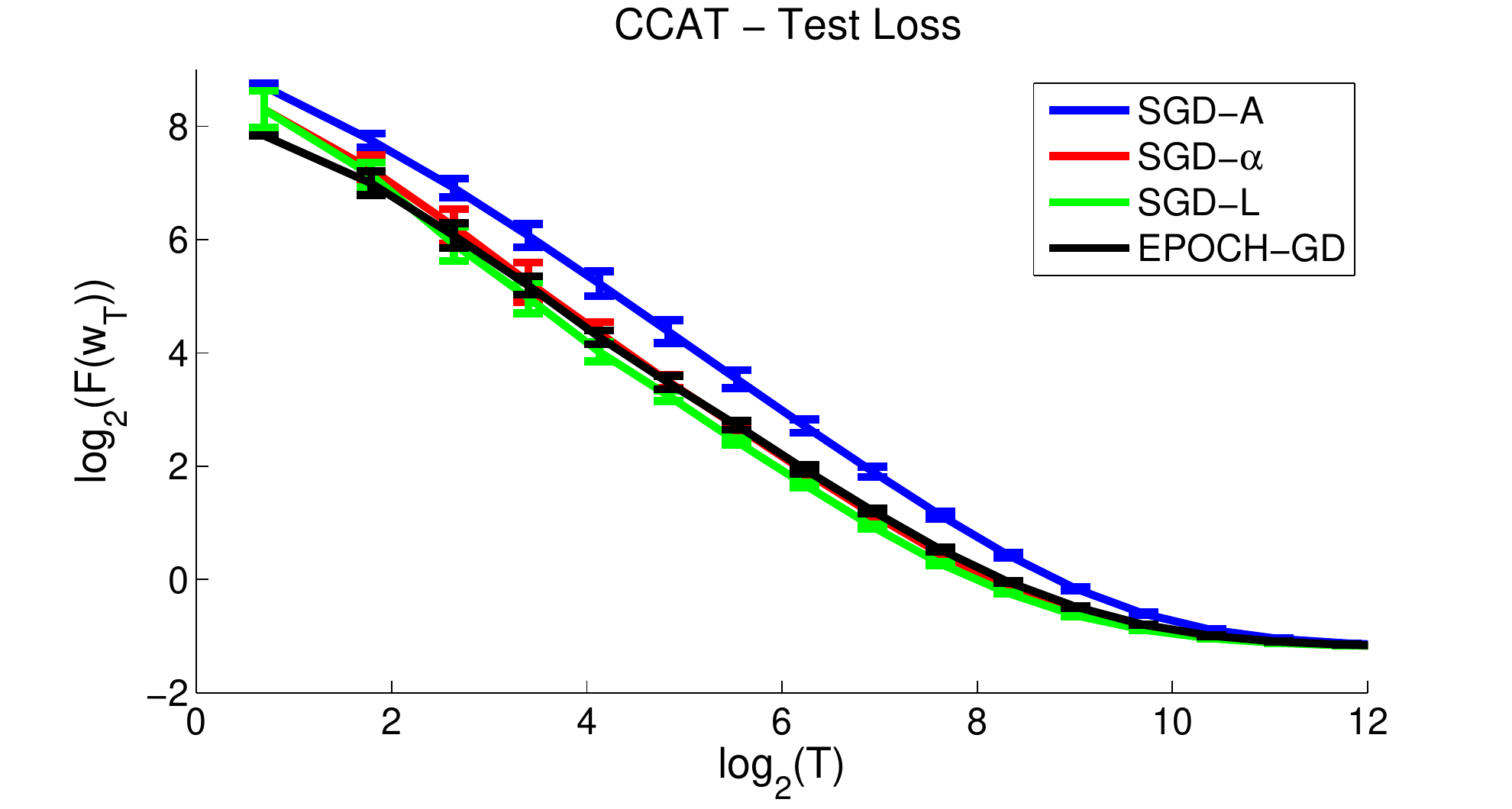}
\caption{Results for the \textsc{ccat} dataset. See \figref{fig:astro} caption for details.}
\label{fig:ccat}
\end{center}
\end{figure}

\begin{figure}
\begin{center}
\includegraphics[scale=0.4]{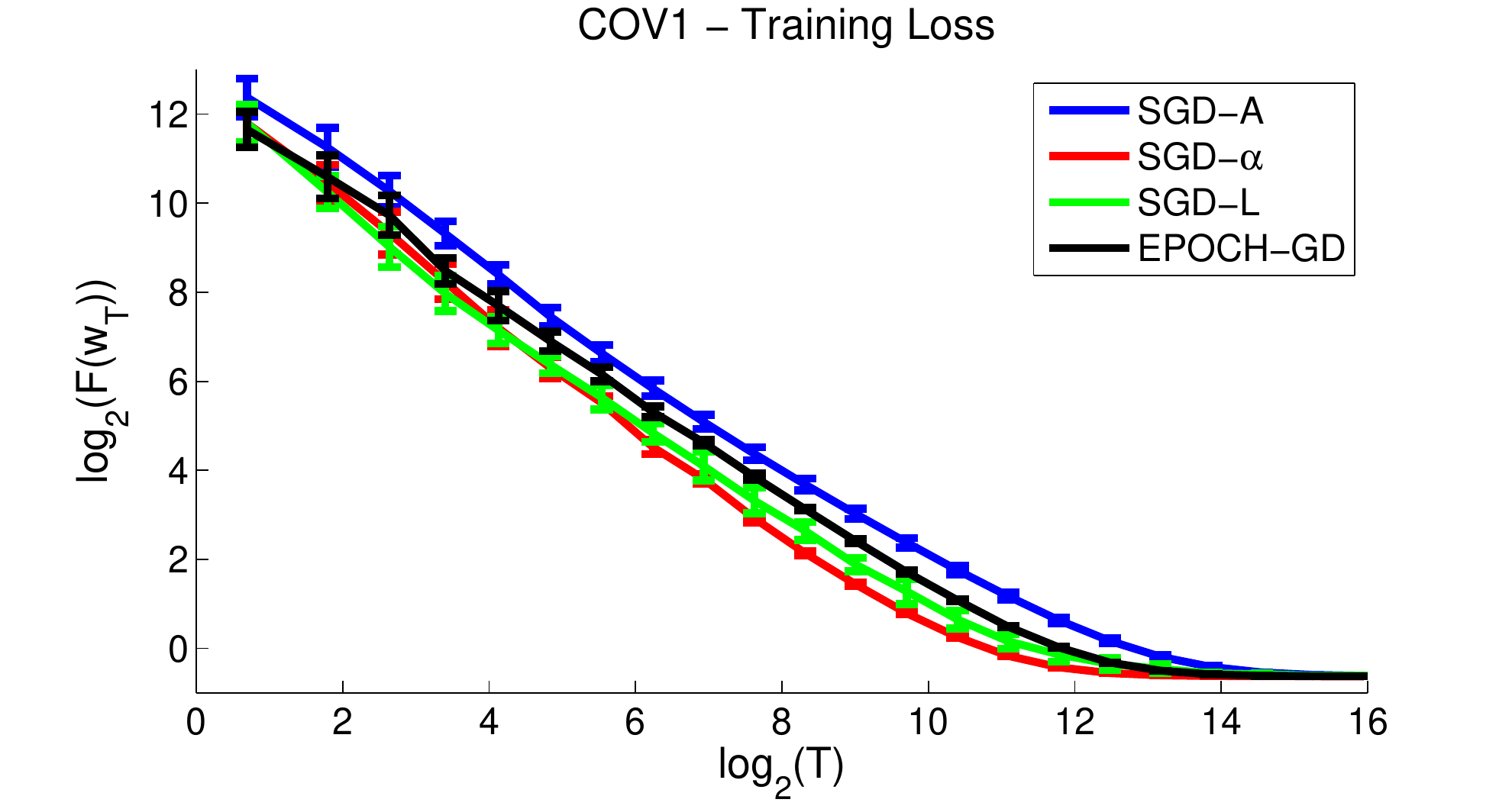}
\includegraphics[scale=0.4]{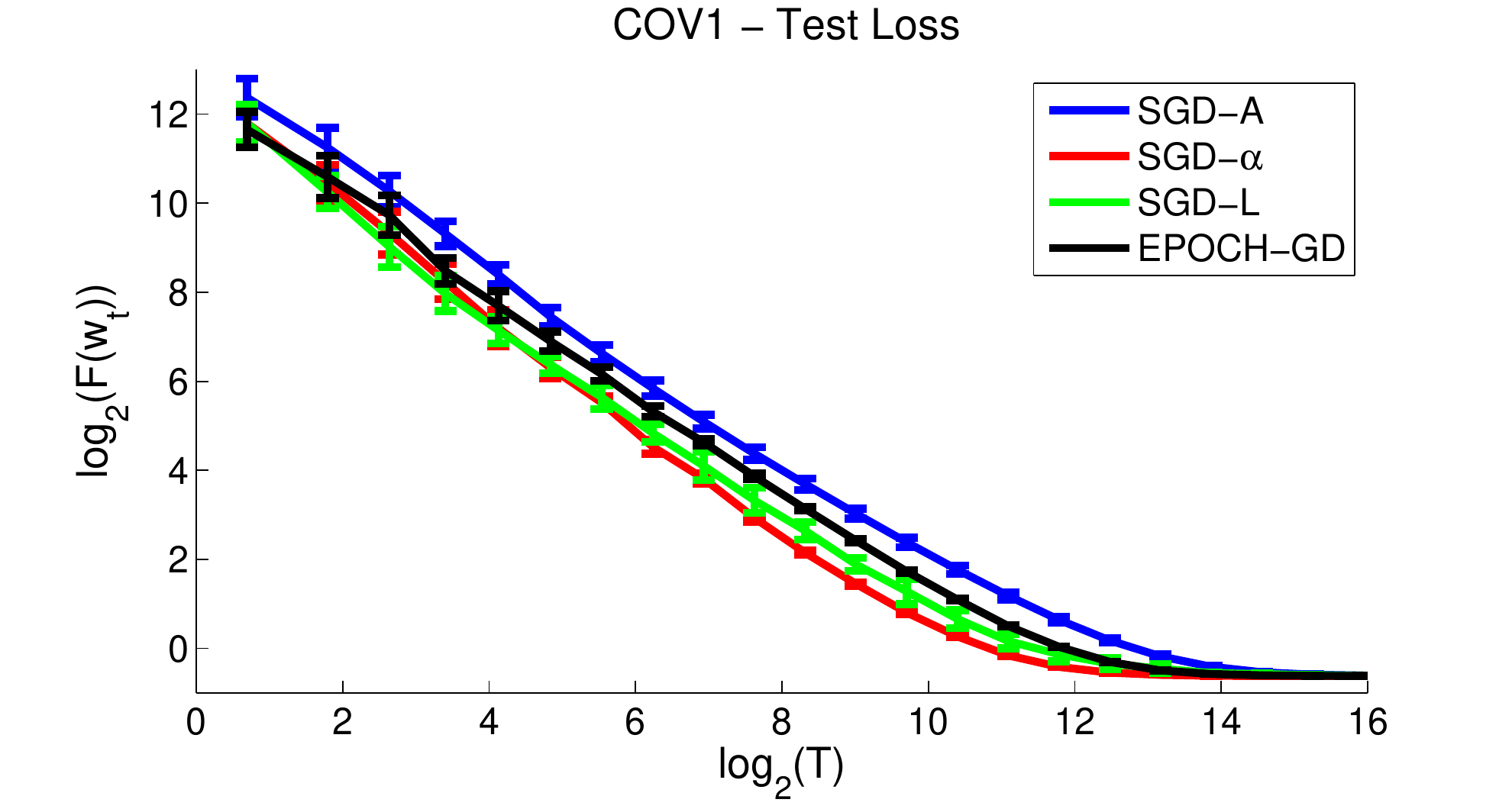}
\caption{Results for the \textsc{ccat} dataset. See \figref{fig:astro} caption for details.}
\label{fig:cov}
\end{center}
\end{figure}

In summary, our experiments indicate the following:
\begin{itemize}
    \item \textsc{Sgd-A}, which averages over all $T$ predictors, is worse than the other approaches. This accords with our theory, as well as the results reported in \cite{ShaSiSreCo11}.
    \item The \textsc{Epoch-Gd} algorithm does have better performance than \textsc{Sgd-A}, but a similar or better performance was obtained using the simpler approaches of $\alpha$-suffix averaging (\textsc{Sgd-$\alpha$}) or even just returning the last predictor (\textsc{Sgd-L}). The good performance of \textsc{Sgd-$\alpha$} is supported by our theoretical results, and so does the performance of \textsc{Sgd-L} in the strongly convex and smooth case.
    \item \textsc{Sgd-L} also performed rather well (with what seems like a $\Theta(1/T)$ rate) on the non-smooth problem reported in \figref{fig:artificial}, although with a larger variance than \textsc{Sgd-$\alpha$}. Our current theory does not cover the convergence of the last predictor in non-smooth problems - see the discussion below.
\end{itemize}

\section{Discussion}\label{sec:discussion}

In this paper, we analyzed the behavior of SGD for strongly convex stochastic optimization problems. We demonstrated that this simple and well-known algorithm performs optimally whenever the underlying function is smooth, but the standard averaging step can make it suboptimal for non-smooth problems. However, a simple modification of the averaging step suffices to recover the optimal rate, and a more sophisticated algorithm is not necessary. Our experiments seem to support this conclusion.

There are several open issues remaining. In particular, the $\Ocal(1/T)$ rate in the non-smooth case still requires some sort of averaging. However, in our experiments and other studies (e.g. \cite{ShaSiSreCo11}), returning the last iterate $\bw_T$ also seems to perform quite well. Our current theory does not cover this - at best, one can use \lemref{lem:close} and Jensen's inequality to argue that the last iterate has a $\Ocal(1/\sqrt{T})$ rate, but the behavior in practice is clearly much better. Does SGD, \emph{without} averaging, obtain an $\Ocal(1/T)$ rate for general strongly convex problems? Also, a fuller empirical study is warranted of whether and which averaging scheme is best in practice.

\textbf{Acknowledgements:} We thank Elad Hazan and Satyen Kale for helpful comments, and to Simon Lacoste-Julien for pointing out a bug in \lemref{lem:close} in a previous version of this paper.

\bibliographystyle{icml2012}
\bibliography{mybib}


\appendix
\newpage
\onecolumn

\section{Justifying $\eta_t=\Theta(1/t)$ Step-Sizes}\label{app:etat}

In this appendix, we justify our focus on the step-size regime $\eta_t=\Theta(1/t)$, by showing that for other step sizes, one cannot hope for an optimal convergence rate in general.

Let us begin by considering the scalar, strongly convex function $F(w)=\frac{1}{2}w^2$, in the deterministic case where $\hat{g}_t = g_t = \nabla F(w_t) = w_t$ with probability $1$, and show that $\eta_t$ cannot be \emph{smaller} than $\Omega(1/t)$. Intuitively, such small step sizes do not allow the iterates $w_t$ to move towards the optimum sufficiently fast. More formally, starting from (say) $w_1=1$ and using the recursive equality $w_{t+1}=w_{t}-\eta_t \hat{g}_t$, we immediately get $w_{T}=\prod_{t=1}^{T-1}(1-\eta_t)$. Thus, if we want to obtain a $\Ocal(1/T)$ convergence rate using the iterates returned by the algorithm, we must at least require that
\[
\prod_{t=1}^{T-1}(1-\eta_t) ~\leq~ \Ocal(1/T).
\]
This is equivalent to requiring that
\[
-\sum_{t=1}^{T-1}\log(1-\eta_t)~\geq~ \Omega(\log(T)).
\]
For large enough $t$ and small enough $\eta_t$, $\log(1-\eta_t)\approx -\eta_t$, and we get that $\sum_{t=1}^{T} \eta_t$ must scale at least logarithmically with $T$. This requires $\eta_t \geq \Omega(1/t)$.

To show that $\eta_t$ cannot be \emph{larger} than $\Ocal(1/t)$, one can consider the function $F(w)=\frac{1}{2}w^2+w$ over the domain $\Wcal=[0,1]$ - this is a one-dimensional special case of the example considered in \thmref{thm:simple}. Intuitively, with an appropriate stochastic gradient model, the random fluctuations in $F(w_{t+1})$ (conditioned on $w_1,\ldots,w_t$) are of order $\eta_t$, so we need $\eta_t=\Ocal(1/t)$ to get optimal rates. In particular, in the proof of \thmref{thm:simple}, we show that for an appropriate stochastic gradient model, $\E[F(w_t)]\geq \E[w_t] \geq \eta_t / 16$ (see \eqref{eq:ewlbound}). A similar lower bound can also be shown for the unconstrained setting considered in \thmref{thm:involved}.

\section{Proofs}\label{app:proofs}

\subsection{Some Technical Results}\label{app:technical}

In this subsection we collect some technical Results we will need for the other proofs.

\begin{lemma}\label{lem:distance}
If $\E[\norm{\hat{\bg}_1}^2]\leq G^2$, then
\[
\E[\norm{\bw_1-\bw^*}^2]\leq  \frac{4G^2}{\lambda^2}.
\]
\end{lemma}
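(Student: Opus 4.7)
The plan is to derive a deterministic bound $\|\bw_1 - \bw^*\| \leq 2\|\bg_1\|/\lambda$, where $\bg_1 = \E[\hat{\bg}_1]$ is the true subgradient at $\bw_1$, and then pass to expectation via Jensen's inequality on the stochastic gradient.

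First, I would invoke the strong convexity inequality \eqref{eq:strconvex} with $\bw = \bw_1$, $\bw' = \bw^*$, and subgradient $\bg_1$ of $F$ at $\bw_1$, giving
\[
F(\bw^*) \;\geq\; F(\bw_1) + \inner{\bg_1,\bw^*-\bw_1} + \tfrac{\lambda}{2}\norm{\bw_1-\bw^*}^2.
\]
Since $\bw^*$ is the global minimizer we have $F(\bw_1) - F(\bw^*) \geq 0$, so rearranging yields
\[
\inner{\bg_1,\bw_1-\bw^*} \;\geq\; \tfrac{\lambda}{2}\norm{\bw_1-\bw^*}^2.
\]
Next, I would apply Cauchy-Schwarz to the left-hand side to obtain $\|\bg_1\|\,\|\bw_1-\bw^*\| \geq \tfrac{\lambda}{2}\|\bw_1-\bw^*\|^2$, and (assuming the nontrivial case $\bw_1 \neq \bw^*$) divide through to get the pointwise bound $\|\bw_1-\bw^*\| \leq 2\|\bg_1\|/\lambda$, hence $\|\bw_1-\bw^*\|^2 \leq 4\|\bg_1\|^2/\lambda^2$.

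Finally, I would take expectations and relate $\|\bg_1\|^2$ back to the stochastic oracle. Since $\bg_1 = \E[\hat{\bg}_1]$, Jensen's inequality gives $\|\bg_1\|^2 = \|\E[\hat{\bg}_1]\|^2 \leq \E[\|\hat{\bg}_1\|^2] \leq G^2$, which combined with the previous step yields $\E[\|\bw_1-\bw^*\|^2] \leq 4G^2/\lambda^2$ as required. There is no real obstacle here; the only subtlety worth flagging is that we are using $\bg_1 = \E[\hat{\bg}_1]$ as an actual subgradient of the deterministic function $F$ at $\bw_1$ (as guaranteed by the oracle specification in \secref{sec:prelim}), so that strong convexity of $F$ can be applied with $\bg_1$ even though our assumption is on the second moment of $\hat{\bg}_1$.
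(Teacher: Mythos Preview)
Your proof is correct and follows essentially the same route as the paper: strong convexity plus Cauchy--Schwarz to get $\norm{\bw_1-\bw^*}^2 \leq \tfrac{4}{\lambda^2}\norm{\bg_1}^2$, then bounding $\norm{\bg_1}^2$ by $\E[\norm{\hat{\bg}_1}^2]\leq G^2$. The only cosmetic difference is that you invoke Jensen's inequality by name for the last step, whereas the paper writes out the decomposition $\E[\norm{\hat{\bg}_1}^2] = \E[\norm{\bg_1}^2] + \E[\norm{\hat{\bg}_1-\bg_1}^2] + 2\E[\inner{\hat{\bg}_1-\bg_1,\bg_1}] \geq \E[\norm{\bg_1}^2]$ explicitly; these are the same observation.
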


\begin{proof}
Intuitively, the lemma holds because the strong convexity of $F$ implies that the expected value of $\norm{\hat{\bg}_1}^2$ must strictly increase as we get farther from $\bw^{*}$. More precisely, strong convexity implies that for any $\bw_1$,
\[
\inner{\bg_1,\bw_1-\bw^{*}} \geq \frac{\lambda}{2}\norm{\bw_1-\bw^{*}}^2
\]
so by the Cauchy-Schwartz inequality,
\begin{equation}\label{eq:dg1}
\norm{\bg_1}^2\geq \frac{\lambda^2}{4}\norm{\bw_1-\bw^{*}}^2.
\end{equation}
Also, we have that
\[
\E[\norm{\hat{\bg}_1}^2] ~=~ \E[\norm{\bg_1+(\hat{\bg}_1-\bg_1)}^2]
~=~ \E[\norm{\bg_1}^2]+\E[\norm{\hat{\bg}_1-\bg_1}^2]
+2\E[\inner{\hat{\bg}_1-\bg_1,\bg_1}]~\geq~
\E[\norm{\bg_1}^2].
\]
Combining this and \eqref{eq:dg1}, we get that for all $t$,
\[
\E[\norm{\bw_1-\bw^*}^2] \leq \frac{4}{\lambda^2}\E[\norm{\hat{\bg}_t}^2] \leq \frac{4G^2}{\lambda^2}.
\]
\end{proof}

The following version of Freedman's inequality appears in \cite{de1999general} (Theorem 1.2A):
\begin{theorem}\label{thm:Freedman-uniform}
Let $d_1, \dots, d_T$ be a martingale difference sequence with a uniform upper bound $b$ on the steps $d_i$.
Let $V$ denote the sum of conditional variances,
\[
V_s = \sum_{i=1}^s \var(d_i ~|~ d_1, \dots, d_{i-1}).
\]
Then, for every $a,v > 0$,
\[
\Prob{\sum_{i=1}^s d_i \ge a \mbox{ and } V_s \le v ~~\mbox{ for some } s\leq T} \le \exp
\left(\frac{-a^2}{2(v + ba)}\right).
\]
\end{theorem}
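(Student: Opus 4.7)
The plan is to use the classical exponential supermartingale technique, which reduces the uniform tail bound on $S_s := \sum_{i=1}^s d_i$ to a supermartingale computation combined with Doob's maximal inequality. The uniformity in $s \le T$ is what distinguishes this from a standard Bernstein/Bennett inequality, and it will be handled essentially for free by the maximal inequality once the right supermartingale is in hand.

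For a free parameter $\lambda \in (0, 1/b)$, I would introduce the process $M_s := \exp(\lambda S_s - \phi(\lambda)\,V_s)$ with $M_0 = 1$, where $\phi(\lambda) := (e^{\lambda b} - 1 - \lambda b)/b^2$ is Bennett's function. The main technical step is to show that $(M_s)$ is a nonnegative supermartingale. This rests on the fact that $y \mapsto (e^y - 1 - y)/y^2$ is nondecreasing on $\reals$, so for $\lambda d_i \le \lambda b$ we have $e^{\lambda d_i} - 1 - \lambda d_i \le \phi(\lambda)\,d_i^2$. Taking conditional expectation and using $\E[d_i \mid \Fcal_{i-1}] = 0$ yields
\[
\E\bigl[e^{\lambda d_i} \mid \Fcal_{i-1}\bigr] \;\le\; 1 + \phi(\lambda)\,\var(d_i \mid \Fcal_{i-1}) \;\le\; \exp\bigl(\phi(\lambda)\,\var(d_i \mid \Fcal_{i-1})\bigr),
\]
from which $\E[M_s \mid \Fcal_{s-1}] \le M_{s-1}$ follows directly.

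With the supermartingale in place, Doob's maximal inequality gives $\Prob{\sup_{s \le T} M_s \ge c} \le \E[M_0]/c = 1/c$ for every $c > 0$. On the event that some $s \le T$ satisfies both $S_s \ge a$ and $V_s \le v$, we have $M_s \ge \exp(\lambda a - \phi(\lambda) v)$, so
\[
\Prob{\exists s \le T : S_s \ge a,\, V_s \le v} \;\le\; \exp\bigl(\phi(\lambda) v - \lambda a\bigr).
\]
The final step is to optimize. Using the term-by-term Taylor relaxation $\phi(\lambda) \le \tfrac{1}{2}\lambda^2/(1 - \lambda b)$ for $0 < \lambda b < 1$ (which follows from $1/k! \le 1/2$ for $k \ge 2$) and choosing $\lambda = a/(v + ba)$, a short algebraic calculation gives $\phi(\lambda) v - \lambda a \le -a^2/(2(v + ba))$, matching the claimed bound.

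I expect the main obstacle to be the conditional moment bound $\E[e^{\lambda d_i} \mid \Fcal_{i-1}] \le \exp(\phi(\lambda)\,\var(d_i \mid \Fcal_{i-1}))$: this is the only point where the one-sided increment bound $d_i \le b$ is actually used, and getting it to hold with the variance (rather than, say, the uniform bound $b^2$) on the right-hand side is what buys the Bernstein-style shape of the final inequality. Everything afterwards — the supermartingale property, the application of Doob, and the optimization over $\lambda$ — is routine, and the uniformity in $s$ emerges with no additional loss, which is the conceptual payoff of this approach over a single-$s$ Chernoff bound followed by a union bound.
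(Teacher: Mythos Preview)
The paper does not actually prove this theorem: it simply quotes it as Theorem~1.2A of \cite{de1999general} and uses it as a black box. Your proposal is therefore not comparable to a proof in the paper, but it is correct and is precisely the standard proof of Freedman's inequality via the exponential supermartingale $M_s = \exp(\lambda S_s - \phi(\lambda)V_s)$, as in Freedman's original 1975 paper and de la Pe\~na's treatment. Two minor remarks: the restriction $\lambda \in (0,1/b)$ is not needed for the supermartingale step itself (any $\lambda>0$ works there) but only for the Taylor relaxation $\phi(\lambda)\le \lambda^2/(2(1-\lambda b))$, and your chosen $\lambda = a/(v+ba)$ does satisfy $\lambda b < 1$; also, what you invoke as ``Doob's maximal inequality'' for a nonnegative supermartingale is more commonly called Ville's inequality, though the content is the same.
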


The proof of the following lemma is taken almost verbatim from \cite{bartlett2008high}, with the only modification being the use of Theorem~\ref{thm:Freedman-uniform} to avoid an unnecessary union bound.
\begin{lemma}
\label{lem:fromFreedman}
Let $d_1,\ldots,d_T$ be a martingale difference sequence with a uniform bound $|d_i| \leq b$ for all $i$. Let
$V_s = \sum_{t=1}^s \var_{t-1} (d_t)$ be the sum of conditional variances of $d_t$'s. Further, let $\sigma_s = \sqrt{V_s}$.
Then we have, for any $\delta < 1/e$ and $T \ge 4$,
\begin{align}
\Prob{ \sum_{t=1}^s d_t > 2 \max\left\{ 2 \sigma_s, b\sqrt{\log(1/\delta)} \right\} \sqrt{\log(1/\delta)}  ~~\mbox{ for some } s\leq T} \le \log(T) \delta\ .
\end{align}
\end{lemma}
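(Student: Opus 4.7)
The plan is to reduce the claim to the uniform Freedman inequality (Theorem~\ref{thm:Freedman-uniform}) via a geometric peeling over the random quantity $V_s$. Because Theorem~\ref{thm:Freedman-uniform} is already uniform in $s$, no separate union bound over $s$ is needed; this is precisely the savings over the argument in \cite{bartlett2008high} mentioned in the remark preceding the lemma, and the only real work is to handle the randomness of $V_s$ itself.

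First I would split into a low-variance and a high-variance regime. In the low-variance regime, where $V_s \le b^2 \log(1/\delta)$ for all $s \le T$ of interest, the target deviation in the statement is dominated by the $2b \log(1/\delta)$ term coming from $b\sqrt{\log(1/\delta)} \cdot \sqrt{\log(1/\delta)}$; applying Theorem~\ref{thm:Freedman-uniform} with $v = b^2 \log(1/\delta)$ and $a$ proportional to $b\log(1/\delta)$ gives an exponent of order $-\log(1/\delta)$, and hence a probability bound of order $\delta$. For the high-variance regime, I would slice the interval $[b^2 \log(1/\delta),\, T b^2]$ into geometric shells $J_k = [2^{k-1} b^2 \log(1/\delta),\, 2^k b^2 \log(1/\delta))$ for $k = 1,\ldots,K$. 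Since $V_s \le T b^2$ deterministically, only $K = O(\log T)$ shells are needed. On the event that $V_s$ ever enters $J_k$, the lemma's target deviation is at least $4 \sigma_s \sqrt{\log(1/\delta)} \ge 2^{(k+3)/2} b \log(1/\delta)$, a lower bound that uses only the left endpoint of the shell and is therefore deterministic on the shell event.

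For each shell I would then apply Theorem~\ref{thm:Freedman-uniform} with $v$ equal to the upper endpoint of $J_k$ and $a$ equal to this shell-specific threshold. A short computation of $-a^2 / (2(v + ba))$ shows that the exponent is at most $-\log(1/\delta)$ uniformly in $k$, so each shell contributes probability at most $\delta$. A union bound over the $K = O(\log T)$ shells, together with the low-variance contribution, then yields the claimed $\log(T)\,\delta$ bound; the numerical factor of $2$ in front of the $\max$ in the statement absorbs any slack in the peeling constants. The main technical obstacle is choosing the shell boundaries and the constant in the ``$2\max$'' expression so that Freedman's exponent comes out uniformly $\le -\log(1/\delta)$ across all shells, including the boundary shells adjacent to the low-variance region; once that bookkeeping is settled, the conclusion is immediate from the union bound.
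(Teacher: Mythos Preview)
Your proposal is correct and follows essentially the same argument as the paper: a geometric peeling of the (random) predictable variation into $O(\log T)$ shells, with the low-variance regime handled by the $b\sqrt{\log(1/\delta)}$ branch of the $\max$, and the uniform-in-$s$ version of Freedman's inequality applied on each shell to avoid a union bound over $s$. The only cosmetic differences are that the paper peels $\sigma_s$ rather than $V_s$ (ratio $2$ in $\sigma_s$ instead of in $V_s$) and absorbs your ``low-variance'' case as the bottom shell $j=0$ with $\alpha_0=b\sqrt{\log(1/\delta)}$, which makes the constant bookkeeping you flag as the main obstacle come out cleanly.
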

\begin{proof}
Note that a crude upper bound on $\var_t d_t$ is $b^2$. Thus, $\sigma_s \le b\sqrt{T}$. We choose a discretization
$0 = \alpha_{-1} < \alpha_0 < \ldots < \alpha_l$
such that $\alpha_{i+1} = 2\alpha_i$ for $i \ge 0$ and $\alpha_l \ge b\sqrt{T}$. We will specify the choice of
$\alpha_0$ shortly. We then have,
{\allowdisplaybreaks
\begin{align*}
&\Prob{ \sum_{t=1}^s d_t > 2 \max\{ 2 \sigma_s, \alpha_0 \} \sqrt{\log(1/\delta)}  ~~\mbox{ for some } s\leq T } \\
&= \sum_{j=0}^l \Prob{ \begin{matrix}
 \sum_{t=1}^s d_t > 2 \max\{ 2 \sigma_s, \alpha_0 \} \sqrt{ \log(1/\delta)  }  \\
\&\ \alpha_{j-1} < \sigma_s \le \alpha_j
\end{matrix}  ~~\mbox{ for some } s\leq T } \\
&\le \sum_{j=0}^l \Prob{ \begin{matrix}
 \sum_{t=1}^s d_t > 2 \alpha_j \sqrt{ \log(1/\delta) }  \\
\&\ \alpha_{j-1}^2 < V_s \le \alpha_j^2
\end{matrix}  ~~\mbox{ for some } s\leq T} \\
&\le \sum_{j=0}^l \Prob{
\sum_{t=1}^s d_t > 2 \alpha_j \sqrt{ \log(1/\delta) } \ \& \
V_s \le \alpha_j^2   ~~\mbox{ for some } s\leq T}  \\
&\le \sum_{j=0}^l \exp \left(
\frac{-4\alpha_j^2 \log(1/\delta)}
{2\alpha_j^2 + \frac{2}{3}\left( 2 \alpha_j \sqrt{ \log(1/\delta) } \right) b }
\right) \\
&= \sum_{j=0}^l \exp \left(
\frac{-2\alpha_j \log(1/\delta)}
{\alpha_j + \frac{2}{3}\left( \sqrt{ \log(1/\delta) } \right) b }
\right)
\end{align*}
}
where the last inequality follows from Theorem~\ref{thm:Freedman-uniform}.
If we now choose $\alpha_0 = b \sqrt{ \log(1/\delta) }$, then $\alpha_j \ge b \sqrt{ \log(1/\delta) }$
for all $j$. Hence every term in the above summation is bounded by $\exp\left(
\frac{-2\log(1/\delta)}{1 + 2/3}\right) < \delta$. Choosing $l = \log(\sqrt{T})$ ensures that
$\alpha_l \ge b\sqrt{T}$. Thus we have
\begin{align*}
\Prob{  \sum_{t=1}^T X_t > 2 \max\{ 2 \sigma_s, b\sqrt{\log(1/\delta)} \} \sqrt{\log(1/\delta)} }
&= \Prob{ \sum_t X_t > 2 \max\{ 2 \sigma_s, \alpha_0 \} \sqrt{\log(1/\delta)}  } \\
&\le (l+1)\delta = (\log(\sqrt{T})+1)\delta \le \log(T)\delta\ .
\end{align*}
\end{proof}

\subsection{Proof of \lemref{lem:close}}\label{app:close}

By the strong convexity of $F$ and the fact that $\bw^*$ minimizes $F$ in $\Wcal$, we have
\[
\inner{\bg_t,\bw_t-\bw^{*}} \geq F(\bw_t)-F(\bw^{*})+\frac{\lambda}{2}\norm{\bw_t-\bw^{*}}^2,
\]
as well as
\[
F(\bw_t)-F(\bw^*) \geq \frac{\lambda}{2}\norm{\bw_t-\bw^{*}}^2.
\]
Also, by convexity of $\Wcal$, for any point $\bv$ and any $\bw\in \Wcal$ we have $\norm{\Pi_{\Wcal}(\bv)-\bw}\leq \norm{\bv-\bw}$. Using these inequalities, we have the following:
\begin{eqnarray*}
    \E\left[\norm{\bw_{t+1}-\bw^{*}}^2\right] &=& \E[\norm{\Pi_{\Wcal}(\bw_{t}-\eta_t \hat{\bg}_t)-\bw^{*}}^2]\\
    &\leq& \E\left[\norm{\bw_{t}-\eta_t \hat{\bg}_t-\bw^{*}}^2\right]\\
    &=& \E\left[\norm{\bw_t-\bw^{*}}^2\right]-2\eta_t \E[\inner{\hat{\bg}_t,\bw_t-\bw^{*}}]+\eta_t^2\E[\norm{\hat{\bg}_t}^2]\\
    &=& \E\left[\norm{\bw_t-\bw^{*}}^2\right]-2\eta_t \E[\inner{\bg_t,\bw_t-\bw^{*}}]+\eta_t^2\E[\norm{\hat{\bg}_t}^2]\\
    &\leq& \E\left[\norm{\bw_t-\bw^{*}}^2\right]-2\eta_t \E\left[F(\bw_t)-F(\bw^{*})+\frac{\lambda}{2}\norm{\bw_t-\bw^{*}}^2\right]+\eta_t^2 G^2\\
    &\leq& \E\left[\norm{\bw_t-\bw^{*}}^2\right]-2\eta_t \E\left[\frac{\lambda}{2}\norm{\bw_t-\bw^{*}}^2+\frac{\lambda}{2}\norm{\bw_t-\bw^{*}}^2\right]+\eta_t^2 G^2\\
    &=& (1-2\eta_t \lambda)\E\left[\norm{\bw_t-\bw^{*}}^2\right]+\eta_t^2 G^2.
\end{eqnarray*}
Plugging in $\eta_t = 1/\lambda t$, we get
\[
\E\left[\norm{\bw_{t+1}-\bw^{*}}^2\right] \leq \left(1-\frac{2}{t}\right)\E\left[\norm{\bw_t-\bw^{*}}^2\right]+\frac{G^2}{\lambda^2 t^2}.
\]
By \lemref{lem:distance}, we know that $\E\left[\norm{\bw_{t}-\bw^{*}}^2\right]\leq 4G^2/\lambda^2 t$ for $t=1$, and the inequality above implies that $\E\left[\norm{\bw_{t}-\bw^{*}}^2\right]\leq 4G^2/\lambda^2 t$ for $t=2$. The result now follows from a simple induction argument, using $t=3$ as the base case.

\subsection{Proof of \thmref{thm:average}}\label{app:average}
For any $t$, define $\bar{\bw}_t = (\bw_1+\ldots+\bw_t)/t$. Then we have
\begin{eqnarray*}
    \E\left[\norm{\bar{\bw}_{t+1}-\bw^{*}}^2\right] &=& \E\left[\norm{\frac{t}{t+1}\bar{\bw}_t+\frac{1}{t+1}\bw_{t+1}-\bw^{*}}^2\right]\\
    &=& \E\left[\norm{\frac{t}{t+1}(\bar{\bw}_t-\bw^{*})+\frac{1}{t+1}(\bw_{t+1}-\bw^{*})}^2\right]\\
    &=& \left(\frac{t}{t+1}\right)^2\E\left[\norm{\bar{\bw}_t-\bw^{*}}^2\right]+\frac{2t}{(t+1)^2}\E\left[\inner{\bar{\bw}_t-\bw^{*},\bw_{t+1}-\bw^{*}}\right]+\frac{1}{(t+1)^2}\E\left[\norm{\bw_{t+1}-\bw^{*}}^2\right]\\
    &\leq&
    \left(\frac{t}{t+1}\right)^2\E\left[\norm{\bar{\bw}_t-\bw^{*}}^2\right]+\frac{2}{t+1}\E\left[\norm{\bar{\bw}_t-\bw^{*}}~\norm{\bw_{t+1}-\bw^{*}}\right]+\frac{1}{(t+1)^2}\E\left[\norm{\bw_{t+1}-\bw^{*}}^2\right]\\
\end{eqnarray*}
Using the inequality $\E[|XY|]\leq \sqrt{\E[X^2]}\sqrt{\E[Y^2]}$ for any random variables $X,Y$ (which follows from Cauchy-Schwartz), and the bound of \lemref{lem:close}, we get that 
\[
\left(\frac{t}{t+1}\right)^2\E\left[\norm{\bar{\bw}_t
-\bw^{*}}^2\right]+\frac{4G}{ \lambda(t+1)^{3/2}}\sqrt{\E[\norm{\bar{\bw}_t-\bw^*}^2]}+\frac{4G^2}{\lambda^2 (t+1)^3}.
\]
By an induction argument (using \lemref{lem:distance} for the base case), it is easy to verify that
\[
\E\left[\norm{\bar{\bw}_{T}-\bw^*}^2\right] \leq \frac{32G^2}{\lambda^2 T}.
\]
By the assumed smoothness of $F$ with respect to $\bw^*$, we have $F(\bar{\bw}_T)-F(\bw^{*}) \leq \frac{\mu}{2}\norm{\bar{\bw}_T-\bw^{*}}^2$. Combining it with the inequality above, the result follows.

\subsection{Proof of \thmref{thm:simple}}\label{app:simple}

The SGD iterate can be written separately for the first coordinate as
\begin{equation}\label{eq:iterate}
w_{t+1,1} = \Pi_{[0,1]}\left((1-\eta_t)w_{t,1}-\eta_t Z_t\right)
\end{equation}

Fix some $t\geq T_0$, and suppose first that $Z_t\leq -1/2$. Conditioned on this event, we have
\[
w_{t+1,1}~\geq~ \Pi_{[0,1]}\left((1-\eta_t)w_{t,1}+\frac{1}{2}\eta_t\right)
~\geq~\Pi_{[0,1]}(\eta_t/2)~\geq~ \eta_t/2,
\]
since $t\geq T_0$ implies $\eta_t =c/t \leq 2$. On the other hand, if $Z_t>-1/2$, we are still guaranteed that $w_{t+1,1}\geq 0$ by the domain constraints. Using these results, we get
\begin{align}\label{eq:ewlbound}
\E[w_{t+1,1}] ~&=~ \Pr(Z_t\leq -1/2)\E[w_{t+1,1}~|~ Z_t\leq -1/2]+\Pr(Z_t> -1/2)\E[w_{t+1,1}~|~Z_t> -1/2]\notag\\
&\geq~ \Pr(Z_t\leq -1/2)\E[w_{t+1,1}~|~Z_t\leq -1/2]\notag\\
&\geq~ \Pr(Z_t\leq -1/2) \frac{1}{2}\eta_t ~=~ \frac{1}{16}\eta_t.
\end{align}
Therefore,
\[
\E[\bar{w}_{T,1}] ~\geq~ \frac{1}{T}\sum_{t=T_0+1}^{T}\E[w_{t,1}] ~\geq~ \frac{1}{16 T}\sum_{t=T_0+1}^{T}\eta_{t-1}~=~ \frac{1}{16 T}\sum_{t=T_0}^{T-1}\eta_t.
\]
Thus, by definition of $F$,
\[
\E\left[F(\bar{\bw}_{T})-F(\bw^{*})\right] ~=~ \E\left[F(\bar{\bw}_{T})\right] ~\geq~ \E\left[F((\bar{w}_{T,1},0,\ldots,0))\right] ~\geq~
\E\left[\bar{w}_{T,1}\right] ~\geq~ \frac{1}{16T}\sum_{t=T_0}^{T-1}\eta_t.
\]
Substituting $\eta_t = c/t$ gives the required result.

\subsection{Proof of \thmref{thm:involved}}\label{app:involved}

The SGD iterate for the first coordinate is
\begin{equation}\label{eq:iterate2}
w_{t+1,1} = \Pi_{[-1,1]}\left(\left(1-\frac{c}{t}\right)w_{t,1}-\begin{cases}Z_t \frac{c}{t} & w_{t,1}\geq 0\\ -\frac{7c}{t} & w_{t,1} < 0 \end{cases}\right).
\end{equation}
The intuition of the proof is that whenever $w_{t,1}$ becomes negative, then the large gradient of $F$ causes $w_{t+1,1}$ to always be significantly larger than $0$. This means that in some sense, $w_{t,1}$ is ``constrained'' to be larger than $0$, mimicking the actual constraint in the example of \thmref{thm:simple} and forcing the same kind of behavior, with a resulting $\Omega(\log(T)/T)$ rate.

To make this intuition rigorous, we begin with the following lemma, which shows that $w_{t,1}$ can never be significantly smaller than $0$, or ``stay'' below $0$ for more than one iteration.

\begin{lemma}\label{lem:above0}
For any $t\geq T_0=\max\{2,6c+1\}$, it holds that
\[
w_{t,1}\geq -\frac{c}{t-1},
\]
and if $w_{t,1}<0$, then
\[
w_{t+1,1}\geq \frac{5c}{t}.
\]
\end{lemma}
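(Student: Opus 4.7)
I would prove the two claims together by induction on $t \geq T_0$, treating them as coupled invariants. Part (b) at time $t$ will follow from part (a) at time $t$ by a direct calculation, while in the induction for part (a) at time $t{+}1$, the sub-case where the previous iterate was negative is handled immediately by part (b) at time $t$.

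I would first dispatch part (b). Assume $w_{t,1} < 0$. From \eqref{eq:iterate2} in the negative branch,
\[
w_{t+1,1} = \Pi_{[-1,1]}\bigl((1-c/t)\,w_{t,1} + 7c/t\bigr).
\]
Using part (a) to bound $w_{t,1} \geq -c/(t-1)$ and the fact that $(1-c/t) > 0$ (since $t \geq T_0 > c$), the pre-projection value is at least $-(1-c/t)\,c/(t-1) + 7c/t$. A short algebraic simplification, which reduces to the inequality $t \geq 2-c$, shows this exceeds $5c/t$ for all $t \geq T_0$. The projection onto $[-1,1]$ is harmless: the pre-projection is at most $7c/t$, so either it already lies in $[-1,1]$ (for $t \geq 7c$) or it is truncated to $1$, which still exceeds $5c/t$ because $t \geq T_0 > 5c$.

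For part (a), I would induct on $t \geq T_0$. The base case $t = T_0$ is verified directly: since $w_{1,1} \geq 0$ by hypothesis, the trajectory can be traced forward for $T_0 - 1$ steps using $\eta_t = c/t$ and the bounded noise $Z_t \in [-1,3]$ to establish the bound at $t = T_0$. For the inductive step, split on the sign of $w_{t-1,1}$. If $w_{t-1,1} < 0$, then part (b) at time $t-1$ gives $w_{t,1} \geq 5c/(t-1) > -c/(t-1)$ at once. If instead $w_{t-1,1} \geq 0$, then \eqref{eq:iterate2} in the positive branch reads
\[
w_{t,1} = \Pi_{[-1,1]}\bigl((1-c/(t-1))\,w_{t-1,1} - Z_{t-1}\,c/(t-1)\bigr),
\]
and one lower-bounds the pre-projection using $w_{t-1,1} \geq 0$ and $Z_{t-1} \leq 3$; the lower projection is inactive since the pre-projection value stays above $-1$ for $t > 3c+1$.

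The subtle step is the sub-case $w_{t-1,1} \geq 0$ in part (a): the naive worst-case bound with $Z_{t-1} = 3$ and $w_{t-1,1}$ close to zero produces only $-3c/(t-1)$, rather than the claimed $-c/(t-1)$. Tightening to the stated constant seems to require an additional structural observation that is carried through the induction — for instance, arguing that when $w_{t-1,1}$ is positive and small, it cannot have just emerged from a part-(b) bounce (which would have put it at least $5c/(t-2)$ above zero), and so must itself come from a positive iterate at time $t-2$ whose dynamics constrain how small $w_{t-1,1}$ can be — or else allowing a slightly weaker leading constant. Since the downstream use in Theorem~\ref{thm:involved} only needs the negative excursion of $w_{t,1}$ to be of order $c/t$, the overall argument is robust to the precise constant in this bound.
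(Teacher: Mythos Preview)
Your plan mirrors the paper's proof: derive (b) from (a) via the negative-branch update, and for (a) split on the sign of $w_{t-1,1}$, invoking (b) at the previous step when that sign is negative. The paper does not set up a separate base case or formal induction either; it argues the two sign cases directly, with the negative case implicitly recursing.

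The gap you flag in the positive sub-case is real, and the paper does not close it. The paper asserts that $w_{t-1,1}\geq 0$ gives $w_{t,1}\geq -c/(t-1)$ ``by \eqref{eq:iterate2} and the fact that $Z_t\geq -1$,'' but a lower bound on $Z$ controls the iterate from \emph{above}, not below; the relevant inequality $Z_{t-1}\leq 3$ yields only $w_{t,1}\geq -3c/(t-1)$, exactly as you compute. (No additional look-back helps: two consecutive nonnegative iterates can leave $w_{t-1,1}$ arbitrarily close to $0$, after which $Z_{t-1}=3$ pushes $w_{t,1}$ down to nearly $-3c/(t-1)$.) So the stated constant appears to be a slip in the paper itself. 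Your instinct that the downstream argument survives is correct: carrying $-3c/(t-1)$ through the negative-branch step gives $w_{t+1,1}\geq c(4t+3c-7)/(t(t-1))$ and hence $w_{t,1}+w_{t+1,1}\geq c(t+3c-7)/(t(t-1))$, which is positive and of order $c/t$ once $t$ exceeds a constant --- all that \thmref{thm:involved} requires. Alternatively one could enlarge the coefficient $7$ in the construction to restore the slack.
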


\begin{proof}
Suppose first that $w_{t-1,1}\geq 0$. Then by \eqref{eq:iterate2} and the fact that $Z_t\geq -1$, we get $w_{t,1}\geq -c/(t-1)$. Moreover, in that case,  if $w_{t,1}<0$, then by \eqref{eq:iterate2} and the previous observation,
\[
w_{t+1,1} = \Pi_{[-1,1]}\left(\left(1-\frac{c}{t}\right)w_{t,1}+\frac{7c}{t}\right) \geq \Pi_{[-1,1]}\left(-\left(1-\frac{c}{t}\right)\frac{c}{t-1}+\frac{7c}{t}\right).
\]
Since $t\geq c$, we have $1-c/t \in (0,1)$, which implies that the above is lower bounded by
\[
\Pi_{[-1,1]}\left(-\frac{c}{t-1}+\frac{7c}{t}\right).
\]
Moreover, since $t\geq 6c$, we have $-c/(t-1)+7c/t \in [-1,1]$, so the projection operator is unnecessary, and we get overall that
\[
w_{t+1,1} \geq -\frac{c}{t-1}+\frac{7c}{t} \geq \frac{5c}{t}.
\]
This result was shown to hold assuming that $w_{t-1,1}\geq 0$. If $w_{t-1,1}<0$, then repeating the argument above for $t-1$ instead of $t$, we must have $w_{t,1}\geq 5c/(t-1) \geq -c/(t-1)$, so the statement in the lemma holds also when $w_{t-1,1}<0$.
\end{proof}

We turn to the proof of \thmref{thm:involved} itself. By \lemref{lem:above0}, if $t\geq T_0$, then $w_{t,1}<0$ implies $w_{t+1,1}\geq 0$, and moreover, $w_{t,1}+w_{t+1,1} \geq -c/(t-1)+5c/t\geq 3c/t$. Therefore, we have the following, where the sums below are only over $t$'s which are between $T_0$ and $T$:
\begin{align}
\E[w_{T_0,1}+\ldots+w_{T,1}] ~&=~ \frac{1}{2}~\E[2w_{T_0,1}+\ldots+2w_{T,1}]
~\geq~ \frac{1}{2}\E\left[\sum_{t:w_{t,1}<0}(w_{t,1}+w_{t+1,1})+\sum_{t:w_{t,1}>c/t}w_{t,1} \right]\notag\\
&\geq~ \frac{1}{2}\E\left[\sum_{t:w_{t,1}<0}\frac{3c}{t}+\sum_{t:w_{t,1}> c/t} \frac{c}{t}\right]\notag\\
&\geq \frac{1}{2}\E\left[\sum_{t:w_{t,1}\notin [0,c/t]}\frac{c}{t}\right] = \sum_{t}\Pr(w_{t,1}\notin[0,\eta_t])\frac{c}{2t}.\label{eq:lowb}
\end{align}
Now, we claim that the probabilities above can be lower bounded by a constant. To see this, consider each such probability for $w_{t,1}$, conditioned on the event $w_{t-1,1}\geq 0$. Using the fact that $c/t< 1$, we have
\begin{align*}
&\Pr(w_{t,1}\in \left[0,c/t\right]~|~w_{t-1,1}\geq 0) ~=~\Pr\left(\Pi_{[-1,1]}\left(\left(1-c/t\right)w_{t-1,1}-(c/t)
Z_t\right) \in \left[0,c/t\right]~|~w_{t-1,1}\geq 0\right)\\
&=~\Pr\left(\left(1-c/t\right)w_{t-1,1}-(c/t) Z_t \in \left[0,c/t\right]~|~w_{t-1,1}\geq 0\right)\\
&= \Pr\left(Z_t \in \left[\left(t/c-1\right)w_{t-1,1}-1~,~
\left(t/c-1\right)w_{t-1,1}\right] ~|~ w_{t-1,1}\geq 0\right).
\end{align*}
This probability is at most $1/4$, since it asks for $Z_t$ being constrained in an interval of size $1$, whereas $Z_t$ is uniformly distributed over $[-1,3]$, which is an interval of length $4$. As a result, we get
\[
\Pr(w_{t,1}\notin[0,c/t]~|~w_{t-1,1}\geq 0) ~=~ 1-\Pr(w_{t,1}\in [0,c/t]~|~w_{t-1,1}\geq 0) ~\geq~ \frac{3}{4}.
\]
From this, we can lower bound \eqref{eq:lowb} as follows:
\begin{align*}
&\sum_{t}\Pr(w_{t,1}\notin[0,\eta_t])\frac{c}{t} ~\geq~
\sum_{t}\Pr(w_{t,1}\notin[0,\eta_t],w_{t-1,1}\geq 0)\frac{c}{2t}\\
&=~ \sum_{t}\Pr(w_{t-1,1}\geq 0) \Pr(w_{t,1}\notin[0,\eta_t]~|~ w_{t-1,1}\geq 0)\frac{c}{2t}\\
&\geq~
\frac{3}{8} \sum_{t=T_0}^{T} \frac{c}{t}\Pr(w_{t-1,1}\geq 0)
~=~ \frac{3}{8}\E\left[\sum_{t=T_0}^{T}\frac{c}{t}\mathbf{1}_{w_{t-1,1}\geq 0}\right]\\
&=~ \frac{3}{16}\E\left[\sum_{t=T_0}^{T}\frac{c}{t}\mathbf{1}_{w_{t-1,1}\geq 0}+\sum_{t=T_0}^{T}\frac{c}{t}\mathbf{1}_{w_{t-1,1}\geq 0}\right]
~\geq~ \frac{3}{16}\E\left[\sum_{t=T_0}^{T-1}\frac{c}{t}\mathbf{1}_{w_{t-1,1}\geq 0}+\sum_{t=T_0+1}^{T}\frac{c}{t}\mathbf{1}_{w_{t-1,1}\geq 0}\right]\\
&=~ \frac{3}{16}\E\left[\sum_{t=T_0}^{T-1}\frac{c}{t}\mathbf{1}_{w_{t-1,1}\geq 0}+\sum_{t=T_0}^{T-1}\frac{c}{t+1}\mathbf{1}_{w_{t,1}\geq 0}\right]
~\geq~ \frac{3}{16}\E\left[\sum_{t=T_0+1}^{T-1}\frac{c}{t+1}\left(\mathbf{1}_{w_{t-1,1}\geq 0}+
\mathbf{1}_{w_{t,1}\geq 0}\right)\right].
\end{align*}

Now, by \lemref{lem:above0}, for any realization of $w_{T_0,1},\ldots,w_{T,1}$, the indicators $\mathbf{1}_{w_{t,1}\geq 0}$ cannot equal $0$ consecutively. Therefore, $\mathbf{1}_{w_{t-1,1}\geq 0}+\mathbf{1}_{w_{t,1}\geq 0}$ must always be at least $1$. Plugging it in the equation above, we get the lower bound $(3c/16)\sum_{t=T_0+2}^{T}\frac{1}{t}$. Summing up, we have shown that
\[
\E[w_{T_0,1}+\ldots+w_{T,1}] \geq \frac{3c}{16}\sum_{t=T_0+2}^{T}\frac{1}{t}.
\]
By the boundedness assumption on $\Wcal$, we have\footnote{To analyze the case where $\Wcal$ is unbounded, one can replace this by a coarse bound on how much $w_{t,1}$ can change in the first $T_0$ iterations, since the step sizes are bounded and $T_0$ is essentially a constant anyway.} $w_{t,1}\geq -1$, so
\[
\E[w_{1,1}+\ldots+w_{T_0-1,1}] \geq -T_0.
\]
Overall, we get
\[
\E[\bar{w}_{T,1}] = \frac{1}{T}\E\left[\sum_{t=1}^{T}w_{t,1}\right] \geq
\frac{3c}{16 T}\sum_{t=T_0+2}^{T}\left(\frac{1}{t}\right)-\frac{T_0}{T}.
\]
Therefore,
\[
\E\left[F(\bar{\bw}_{T})-F(\bw^{*})\right] ~=~ \E\left[F(\bar{\bw}_{T})\right] ~\geq~ \E\left[F((\bar{w}_{T,1},0,\ldots,0))\right] \geq
\E\left[\bar{w}_{T,1}\right] ~\geq~ \frac{3c}{16 T}\sum_{t=T_0+2}^{T}\left(\frac{1}{t}\right)-\frac{T_0}{T}.
\]
as required.

\subsection{Proof of \thmref{thm:alphaaveraging}}\label{app:sufproof}
\begin{proof}
Using the derivation as in the proof of \lemref{lem:close}, we can upper bound $\E[\norm{\bw_{t+1}-\bw^{*}}^2]$ by
\[
\E[\norm{\bw_t-\bw^{*}}^2]-2\eta_t \E[\inner{\bg_t,\bw_t-\bw^{*}}]+\eta_t^2G^2.
\]
Extracting the inner product and summing over $t=(1-\alpha)T+1,\ldots,T$, we get
\begin{equation}
\sum_{t=(1-\alpha) T+1}^{T}\E[\inner{\bg_t,\bw_t-\bw^{*}}]
~\leq~ \sum_{t=(1-\alpha) T+1}^{T}\frac{\eta_t G^2}{2} +\notag
\sum_{t=(1-\alpha) T+1}^{T}\left(\frac{\E[\norm{\bw_t-\bw^{*}}^2]}{2\eta_t}-\frac{\E[\norm{\bw_{t+1}-\bw^{*}}^2]}{2\eta_t}\right).
\label{eq:apphalf1}
\end{equation}
By convexity of $F$, $\sum_{t=(1-\alpha) T+1}^{T}\E[\inner{\bg_t,\bw_t-\bw^{*}}]$ is lower bounded by
\[
\sum_{t=(1-\alpha) T+1}^{T}\E[F(\bw_t)-F(\bw^{*})]
~\geq~ \alpha T\E\left[\left(F(\bar{\bw}^{\alpha}_{T})-F(\bw^{*})\right)\right].
\]
Substituting this lower bound into \eqref{eq:apphalf1} and slightly rearranging the right hand side, we get that $\E[F(\bar{\bw}^{\alpha}_{T})-F(\bw^{*})]$ can be upper bounded by
\[
\frac{1}{2\alpha T}\left(\frac{1}{\eta_{(1-\alpha)T+1}}\E[\norm{\bw_{(1-\alpha)T+1}
-\bw^*}^2]
+
\sum_{t=(1-\alpha)T+1}^{T}\E[\norm{\bw_t-\bw^*}^2]
\left(\frac{1}{\eta_t}-\frac{1}{\eta_{t-1}}\right)
+
G^2\sum_{t=(1-\alpha)T+1}^{T}\eta_t\right).
\]
Now, we invoke \lemref{lem:close}, which tells us that with any strongly convex $F$, even non-smooth, we have $\E[\norm{\bw_t-\bw^{*}}^2]\leq \Ocal(1/t)$. More specifically, we can upper bound the expression above by
\[
\frac{2G^2}{\alpha T\lambda^2}\left(\frac{1}{((1-\alpha)T+1)\eta_{(1-\alpha)T+1}}
+
\sum_{t=(1-\alpha)T+1}^{T}\frac{1/\eta_t-1/\eta_{t-1}}{t}\right)
+
\frac{G^2}{2\alpha T}\sum_{t=(1-\alpha) T+1}^{T}\eta_t.
\]
In particular, since we take $\eta_t=1/\lambda t$, we get
\[
\frac{2G^2}{\alpha T \lambda}\left(1+\sum_{t=(1-\alpha)T+1}^{T}\frac{1}{t}\right)
+\frac{G^2}{2\alpha T \lambda}\sum_{t=(1-\alpha)T+1}^{T}\frac{1}{t}.
\]
It can be shown that $\sum_{t=(1-\alpha)T+1}^{T}\frac{1}{t} \leq \log(1/(1-\alpha))$. Plugging it in and slightly simplifying, we get the desired bound.
\end{proof}

\subsection{Proof of \propref{prop:closehighprob}}\label{app:highprob}

To prove this proposition, we will first prove the two auxiliary results, which rewrite $\norm{\bw_{t+1}-\bw^{*}}^2$ in a more explicit form and provide a loose uniform upper bound. We will use the notation $\hat{\bz}_t$ to denote $\bg_t-\hat{\bg}_t$.

\begin{lemma}\label{lem:uniformupperbound}
For all $t$, it holds with probability $1$ that
\[
\norm{\bw_t-\bw^*}\leq \frac{2G}{\lambda}
\]
\end{lemma}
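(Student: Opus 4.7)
The plan is to mimic the proof of \lemref{lem:distance} almost verbatim, but observe that every step actually holds pointwise (not merely in expectation) for each iterate $\bw_t$ once we have the stronger assumption $\norm{\hat{\bg}_t}^2\le G^2$ with probability $1$. The only genuinely stochastic object in the argument was the bound $\E[\norm{\hat{\bg}_1}^2]\le G^2$ that was used to control $\norm{\bg_1}$; under the almost-sure boundedness hypothesis of \propref{prop:closehighprob} this step becomes deterministic and works at every $t$.

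Concretely, I would proceed in three short steps. First, apply $\lambda$-strong convexity of $F$ at $\bw_t$ with the comparison point $\bw^*$, using the particular subgradient $\bg_t=\E[\hat{\bg}_t\mid \bw_t]$: this gives
\[
\inner{\bg_t,\bw_t-\bw^{*}}\;\ge\; F(\bw_t)-F(\bw^*)+\frac{\lambda}{2}\norm{\bw_t-\bw^{*}}^2\;\ge\;\frac{\lambda}{2}\norm{\bw_t-\bw^{*}}^2,
\]
since $\bw^*$ is a minimizer. Second, apply Cauchy--Schwartz to the left-hand side and (assuming $\bw_t\neq \bw^*$, otherwise the claim is trivial) divide both sides by $\norm{\bw_t-\bw^{*}}$ to obtain the pointwise inequality $\norm{\bg_t}\ge (\lambda/2)\norm{\bw_t-\bw^{*}}$.

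Third, I bound $\norm{\bg_t}$ deterministically. Since $\norm{\hat{\bg}_t}\le G$ almost surely, Jensen's inequality applied to the conditional expectation yields
\[
\norm{\bg_t}\;=\;\bigl\lVert \E[\hat{\bg}_t\mid \bw_t]\bigr\rVert \;\le\; \E\bigl[\,\norm{\hat{\bg}_t}\,\big|\,\bw_t\bigr]\;\le\; G
\]
with probability $1$. Combining the two inequalities gives $(\lambda/2)\norm{\bw_t-\bw^{*}}\le G$, i.e.\ $\norm{\bw_t-\bw^{*}}\le 2G/\lambda$, uniformly in $t$ and with probability $1$.

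There is essentially no obstacle here; the lemma is really just the observation that the one-step bound of \lemref{lem:distance}, whose only use of randomness was the variance identity $\E[\norm{\hat{\bg}_1}^2]=\E[\norm{\bg_1}^2]+\E[\norm{\hat{\bg}_1-\bg_1}^2]$, can be replaced by a pointwise Jensen step once $\hat{\bg}_t$ is almost surely bounded, and that the resulting argument is completely indifferent to the index $t$ and to the history generating $\bw_t$.
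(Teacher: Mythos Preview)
Your proof is correct and follows essentially the same approach as the paper: use strong convexity to get $\inner{\bg_t,\bw_t-\bw^*}\ge\frac{\lambda}{2}\norm{\bw_t-\bw^*}^2$, apply Cauchy--Schwartz, and then use the almost-sure bound $\norm{\hat{\bg}_t}\le G$ (via Jensen) to conclude $\norm{\bg_t}\le G$, exactly as the paper does. The only cosmetic difference is that the paper compresses these steps into a single chain of inequalities and states the Jensen step implicitly (``which follows from the assumption $\norm{\hat{\bg}_t}\le G$ with probability $1$''), whereas you spell it out.
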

\begin{proof}
The proof is analogous to that of \lemref{lem:distance}, using the stronger condition that $\norm{\bg_t}\leq G$ (which follows from the assumption $\norm{\hat{\bg}_t}\leq G$ with probability $1$). Using strong convexity, we have
\[
G\norm{\bw_t-\bw^*}\geq \norm{\bg_t}\norm{\bw_t-\bw^*}\geq \inner{\bg_t,\bw_t-\bw^*} \geq \frac{\lambda}{2}\norm{\bw_t-\bw^*}^2.
\]
The lemma trivially holds for $\norm{\bw_t-\bw^*}=0$. Otherwise, divide both sides by $\lambda\norm{\bw_t-\bw^*}/2$, and the result follows.
\end{proof}

\begin{lemma} \label{lem:recursive-def}
Under the conditions of \propref{prop:closehighprob}, it holds for any $t\geq 2$ that
\[
\norm{\bw_{t+1}-\bw^{*}}^2 ~\leq~
\frac{2}{\lambda(t-1)t}\sum_{i= 2 }^{t}(i-1)\inner{\hat{\bz}_i,\bw_i-\bw^*}
+\frac{G^2}{\lambda^2 t}.
\]
\end{lemma}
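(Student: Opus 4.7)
My plan is to derive a pathwise one-step inequality and then telescope it after choosing the right multiplier. First, exactly as in the proof of \lemref{lem:close}, I would use the nonexpansiveness of $\Pi_\Wcal$ to get $\norm{\bw_{t+1}-\bw^*}^2\le\norm{\bw_t-\eta_t\hat{\bg}_t-\bw^*}^2$, and expand the square. Splitting $\hat{\bg}_t=\bg_t-\hat{\bz}_t$, invoking the strong-convexity estimate $\inner{\bg_t,\bw_t-\bw^*}\ge \lambda\norm{\bw_t-\bw^*}^2$ (which is the chain of two inequalities used in the proof of \lemref{lem:close}), and using the almost-sure bound $\norm{\hat{\bg}_t}^2\le G^2$ yields the pathwise recursion
\[
\norm{\bw_{t+1}-\bw^*}^2 \le (1-2\eta_t\lambda)\,\norm{\bw_t-\bw^*}^2 + 2\eta_t\inner{\hat{\bz}_t,\bw_t-\bw^*} + \eta_t^2 G^2.
\]
The only departure from the in-expectation proof of \lemref{lem:close} is that I keep the noise inner products $\inner{\hat{\bz}_t,\bw_t-\bw^*}$ exposed instead of averaging them away; these are precisely what \propref{prop:closehighprob} will later control via Freedman's inequality.

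Next, substituting $\eta_t=1/(\lambda t)$ turns the contraction factor into $(t-2)/t$. The key observation is that multiplication by $t(t-1)$ is the exact multiplier that telescopes this factor, since $t(t-1)\cdot (t-2)/t=(t-1)(t-2)$. Setting $U_s:=(s-1)(s-2)\norm{\bw_s-\bw^*}^2$, the recursion rewrites as
\[
U_{s+1}\le U_s + \frac{2(s-1)}{\lambda}\inner{\hat{\bz}_s,\bw_s-\bw^*} + \frac{G^2(s-1)}{\lambda^2 s}.
\]

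I would then sum this from $s=2$ to $s=t$; the base case $U_2=0$ is automatic because of the factor $(s-2)$. This yields
\[
t(t-1)\,\norm{\bw_{t+1}-\bw^*}^2 \le \frac{2}{\lambda}\sum_{i=2}^{t}(i-1)\inner{\hat{\bz}_i,\bw_i-\bw^*} + \frac{G^2}{\lambda^2}\sum_{i=2}^{t}\frac{i-1}{i}.
\]
A trivial bound $\sum_{i=2}^{t}(i-1)/i\le t-1$, followed by dividing both sides by $t(t-1)$, produces exactly the inequality in the statement. There is no real obstacle: the whole trick is recognizing that the $(1-2/t)$ contraction from $\eta_t=1/(\lambda t)$ combined with strong convexity admits a clean telescoping via the weight $t(t-1)$, which automatically produces the linear weighting $(i-1)$ on the noise terms that is the point of the lemma.
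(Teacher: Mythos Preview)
Your proposal is correct and follows essentially the same approach as the paper: both derive the pathwise one-step inequality $(1-2/t)\norm{\bw_t-\bw^*}^2+\frac{2}{\lambda t}\inner{\hat{\bz}_t,\bw_t-\bw^*}+\frac{G^2}{\lambda^2 t^2}$ and then unwind it from $t$ down to $2$. The only cosmetic difference is that the paper writes out the product $\prod_{j=i+1}^{t}(1-2/j)=\frac{(i-1)i}{(t-1)t}$ explicitly and substitutes, whereas you equivalently multiply through by $t(t-1)$ to telescope via $U_s=(s-1)(s-2)\norm{\bw_s-\bw^*}^2$; the resulting bounds are identical.
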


\begin{proof}
By the strong convexity of $F$ and the fact that $\bw^*$ minimizes $F$ in $\Wcal$, we have
\[
\inner{\bg_t,\bw_t-\bw^{*}} \geq F(\bw_t)-F(\bw^{*})+\frac{\lambda}{2}\norm{\bw_t-\bw^{*}}^2,
\]
as well as
\[
F(\bw_t)-F(\bw^*) \geq \frac{\lambda}{2}\norm{\bw_t-\bw^{*}}^2.
\]
Also, by convexity of $\Wcal$, for any point $\bv$ and any $\bw\in \Wcal$ we have $\norm{\Pi_{\Wcal}(\bv)-\bw}\leq \norm{\bv-\bw}$. Using these inequalities, we have the following:
\begin{eqnarray}
    \norm{\bw_{t+1}-\bw^{*}}^2 &=& \norm{\Pi_{\Wcal}(\bw_{t}-\eta_t \hat{\bg}_t)-\bw^{*}}^2\notag\\
    &\leq& \norm{\bw_{t}-\eta_t \hat{\bg}_t-\bw^{*}}^2\notag\\
    &=& \norm{\bw_t-\bw^{*}}^2-2\eta_t \inner{\hat{\bg}_t,\bw_t-\bw^{*}}
    +\eta_t^2\norm{\hat{\bg}_t}^2\notag\\
    &=& \norm{\bw_t-\bw^{*}}^2-2\eta_t \inner{\bg_t,\bw_t-\bw^{*}}+2\eta_t\inner{\hat{\bz}_t,\bw_t-\bw^{*}}
    +\eta_t^2\norm{\hat{\bg}_t}^2\notag\\
    &\leq& \norm{\bw_t-\bw^{*}}^2-2\eta_t (F(\bw_t)-F(\bw^{*}))+\frac{\lambda}{2}\norm{\bw_t-\bw^{*}}^2
    +2\eta_t\inner{\hat{\bz}_t,\bw_t-\bw^{*}}+\eta_t^2 G^2\notag\\
    &\leq& \norm{\bw_t-\bw^{*}}^2-2\eta_t \frac{\lambda}{2}\norm{\bw_t-\bw^{*}}^2+\frac{\lambda}{2}\norm{\bw_t-\bw^{*}}^2
    +2\eta_t\inner{\hat{\bz}_t,\bw_t-\bw^{*}}+\eta_t^2 G^2\notag\\
    &=& (1-2\eta_t \lambda)\norm{\bw_t-\bw^{*}}^2+2\eta_t\inner{\hat{\bz}_t,\bw_t-\bw^{*}}+\eta_t^2 G^2\notag\\
    &=& \left(1-\frac{2}{t}\right)\norm{\bw_t-\bw^{*}}^2+\frac{2}{\lambda t}\inner{\hat{\bz}_t,\bw_t-\bw^{*}}+\left(\frac{G}{\lambda t}\right)^2.\label{eq:development}
\end{eqnarray}
Unwinding this recursive inequality till $t=2$, we get that for any $t\geq 2$,
\[
\norm{\bw_{t+1}-\bw^*}^2 \leq \frac{2}{\lambda}\sum_{i= 2 }^{t}\frac{1}{i}\left(\prod_{j=i+1}^{t}\left(1-\frac{2}{j}\right)\right)
\inner{\hat{\bz}_i,\bw_i-\bw^*}+\frac{G^2}{\lambda^2}\sum_{i= 2 }^{t}\frac{1}{i^2}\prod_{j=i+1}^{t}\left(1-\frac{2}{j}\right).
\]
We now note that
\begin{align*}
\prod_{j=i+1}^{t}\left(1-\frac{2}{j}\right)~=~
\prod_{j=i+1}^{t}\frac{j-2}{j}
~=~
\frac{(i-1)i}{(t-1)t}
\end{align*}
and therefore
\[
\sum_{i= 2 }^{t}\frac{1}{i^2}\prod_{j=i+1}^{t}\left(1-\frac{2}{j}\right)
~=~
\sum_{i= 2 }^{t}\frac{i-1}{i(t-1)t} ~\leq~ \frac{1}{t}
\]
as well as
\[
\sum_{i= 2 }^{t}\frac{1}{i}\prod_{j=i+1}^{t}\left(1-\frac{2}{j}\right)\inner{\hat{\bz}_i,\bw_i-\bw^*}
~=~
\frac{1}{(t-1)t}\sum_{i= 2 }^{t}(i-1)\inner{\hat{\bz}_i,\bw_i-\bw^*}
\]
Plugging this back, we get the desired bound.
\end{proof}

With this result at hand, we are now in a position to prove our high probability bound. Denote $Z_i = \inner{\hat{\bz}_i,\bw_i-\bw^{*}}$. We have that the conditional expectation of $Z_i$, given previous rounds, is $\E_{i-1}[Z_i]  = 0$, and the conditional variance, by Cauchy-Schwartz and the easily verified fact that $\norm{\hat{\bz}_i}=\norm{\hat{\bg}_t-\bg} \leq 2G$, is $\var_{i-1}(Z_i) \leq 4G^2\norm{\bw_i-\bw^*}^2$. By Lemma~\ref{lem:recursive-def}, for all $t\geq 2$,
	\begin{align}
		\label{eq:recurse}
	\norm{\bw_{t+1}-\bw^*}^2 ~\leq~ \frac{2}{\lambda(t-1)t}\sum_{i= 2}^{t}(i-1)Z_i+\frac{G^2}{\lambda^2 t}.
	\end{align}
Considering the sum $\sum_{i=2}^{t}(i-1)Z_i$, we have that the sum of conditional variances satisfies
\[
\sum_{i=2}^t\var_{i-1}((i-1)Z_i) \leq 4G^2 \sum_{i=2}^t (i-1)^2 \norm{\bw_i-\bw^*}^2.
\]
We also have the uniform bound
\begin{equation}\label{eq:domainbound}
|(i-1)Z_i| \leq 2G(t-1) \norm{\bw_i-\bw^*} \leq \frac{4G^2(t-1)}{\lambda},
\end{equation}
where the last upper bound is by \lemref{lem:uniformupperbound}.

We now apply Lemma~\ref{lem:fromFreedman} to the sum of martingale differences $\sum_{i= 2}^{t}(i-1)Z_i$, and get that as long as $T\geq 4$ and $\delta\in (0,1/e)$, then with probability at least $1-\delta$, for all $t\leq T$,
\[
\sum_{i=2}^{t}(i-1)Z_i \leq 8 G \max\left\{\sqrt{\sum_{i=2}^t(i-1)^2 \norm{\bw_i-\bw^*}^2}~,~ \frac{G(t-1)}{\lambda}\sqrt{\log\left(\frac{\log(T)}{\delta}\right)} \right\} \sqrt{\log\left(\frac{\log(T)}{\delta}\right)}.
\]
Plugging this back into \eqref{eq:recurse}, we get that with probability at least $1-\delta$, for all $t\geq 2$,
\begin{align*}
\norm{\bw_{t+1}&-\bw^*}^2 \\
&\leq \frac{16G}{\lambda(t-1)t}\max\left\{\sqrt{\sum_{i=2}^t(i-1)^2 \norm{\bw_{i}-\bw^*}^2}~,~ \frac{G(t-1)}{\lambda}\sqrt{\log\left(\frac{\log(T)}{\delta}\right)} \right\} \sqrt{\log\left(\frac{\log(T)}{\delta}\right)}  +
\frac{G^2}{\lambda^2 t}\\
&\leq \frac{16G\sqrt{\log(\log(T)/\delta)}}{\lambda(t-1)t}\sqrt{\sum_{i=2}^t(i-1)^2 \norm{\bw_{i}-\bw^*}^2}+\frac{16G^2 \log(\log(T)/\delta)}{\lambda^2 t}+\frac{G^2}{\lambda^2 t}\\
&= \frac{16G\sqrt{\log(\log(T)/\delta)}}{\lambda(t-1)t}\sqrt{\sum_{i=2}^t(i-1)^2 \norm{\bw_{i}-\bw^*}^2}+\frac{G^2(16\log(\log(T)/\delta)+1)}{\lambda^2 t}.
\end{align*}
What is left to do now is an induction argument, to show that $\norm{\bw_t-\bw^*}^2 \leq a/t$ for a suitably chosen value of $a$. By \lemref{lem:uniformupperbound}, this certainly holds for $t=1,2$ if $a\geq 8G^2/\lambda^2$. To show the induction step, let us rewrite the displayed inequality above as
\[
\norm{\bw_{t+1}-\bw^*}^2 \leq \frac{b}{(t-1)t}\sqrt{\sum_{i=2}^t(i-1)^2 \norm{\bw_{i}-\bw^*}^2}+\frac{c}{t},
\]
where $b=16G\sqrt{\log(\log(T)/\delta)}/\lambda$ and $c=G^2(16\log(\log(T)/\delta)+1)/\lambda^2$. By the induction hypothesis, $\norm{\bw_{i}-\bw^*}^2\leq a/i$ for all $i=2,\ldots,t$. Therefore, to show that $\norm{\bw_{t+1}-\bw^*}^2\leq a/(t+1)$, it suffices to find a sufficiently large value of $a$ so that
\[
\frac{a}{t+1}\geq \frac{b}{(t-1)t}\sqrt{\sum_{i=2}^t(i-1)^2 \frac{a}{i}}+\frac{c}{t}.
\]
Clearly, it is enough to require
\[
\frac{a}{t+1}\geq \frac{b}{(t-1)t}\sqrt{\sum_{i=2}^t (i-1) a}+\frac{c}{t}
~=~ \frac{b}{(t-1)t}\sqrt{\frac{(t-1)t}{2} a}+\frac{c}{t}
~=~ \frac{b}{\sqrt{2(t-1)t}}\sqrt{a}+\frac{c}{t}.
\]
Multiplying both sides by $t+1$, switching sides, and using the fact that $(t+1)/\sqrt{2(t-1)t}$ as well as $(t+1)/t$ is at most $3/2$ for any $t\geq 2$, it follows that $a$ should satisfy
\[
a-\frac{3b}{2}\sqrt{a}-\frac{3c}{2} \geq 0.
\]
Solving the quadratic inequality, we get that $a$ should satisfy
\[
\sqrt{a} \geq \frac{1}{2}\left(\frac{3b}{2}+\sqrt{\frac{9b^2}{4}+6c}\right).
\]
Taking the square of both sides and using the fact that $(x+y)^2 \leq 2(x^2+y^2)$, it is again sufficient that
\[
a ~\geq~ \frac{1}{2}\left(\frac{9b^2}{4}+\frac{9b^2}{4}+6c\right)
~=~ \frac{9b^2}{4}+3c.
\]
Substituting in the values of $b,c$, we get 
\[
a ~\geq~ \frac{576 G^2\log(\log(T)/\delta)}{\lambda^2}+\frac{3G^2(16\log(\log(T)/\delta)+1)}{\lambda^2}
~=~ \frac{(624\log(\log(T)/\delta)+1)G^2}{\lambda^2}.
\]
Thus, if we take
\[
a ~=~ \frac{(624\log(\log(T)/\delta)+1)G^2}{\lambda^2},
\]
the induction hypothesis holds. Also, recall that for the base case to hold, we also need to assume $a\geq 8G^2/\lambda^2$, but this automatically holds for the value of $a$ above (since we assume $\delta< 1/e$). This gives us the required result.
\end{document}